\documentclass[letterpaper]{article} 
\usepackage{aaai2027}
\nocopyright
\usepackage[hyphens]{url}  
\usepackage{graphicx} 
\usepackage{amsmath} 
\usepackage{amssymb} 
\usepackage{amsthm} 
\newtheorem{theorem}{Theorem}
\newtheorem{proposition}{Proposition}

\newtheorem{lemma}{Lemma}
\usepackage{tikz}
\usetikzlibrary{arrows.meta,positioning,fit,backgrounds,shapes.geometric}
\usepackage{multirow}
\usepackage{natbib}  
\usepackage{caption} 
\usepackage{algorithm}
\usepackage{algorithmic}

\usepackage{newfloat}
\usepackage{listings}
\DeclareCaptionStyle{ruled}{labelfont=normalfont,labelsep=colon,strut=off} 
\floatstyle{ruled}
\newfloat{listing}{tb}{lst}{}
\floatname{listing}{Listing}

\usepackage{booktabs}

\title{Look Again Before You Abstain:\\Budgeted Conformal Evidence Acquisition for Reliable Vision-Language Models}
\author{
    Jian Xu\textsuperscript{\rm 1,2},
    Yanning Wu\textsuperscript{\rm 3},
    Delu Zeng\textsuperscript{\rm 3},
    John Paisley\textsuperscript{\rm 4},
    Qibin Zhao\textsuperscript{\rm 2}
}
\affiliations{
    \textsuperscript{\rm 1}RIKEN iTHEMS\quad
    \textsuperscript{\rm 2}RIKEN AIP\quad
    \textsuperscript{\rm 3}South China University of Technology\quad
    \textsuperscript{\rm 4}Columbia University\\
    jian.xu@riken.jp
}

\begin{document}
\maketitle

\begin{abstract}
Large vision-language models (LVLMs) hallucinate, asserting visual details the image does not support. Selective prediction with a distribution-free guarantee---abstain unless a claim is grounded---bounds the hallucination rate among asserted claims, but at a brutal price: to hold it below $5\%$ on balanced object-existence claims, a conformal grounding filter must abstain on over $80\%$. We argue abstention is wasteful when more visual evidence is cheap, and introduce \textbf{Budgeted Conformal Evidence Acquisition (BCEA)}, which turns the binary answer/abstain decision into a three-way choice: answer, abstain, or \emph{acquire} more evidence by re-examining the image (zoom, crop, or a claim-specific intervention) under a compute budget. Our key observation: acquisition plugged naively into a calibrated filter \emph{breaks the guarantee}---it destroys the exchangeability conformal calibration relies on---but folding the whole acquisition policy into the score and re-calibrating \emph{restores} the finite-sample guarantee while recovering coverage. Where a global score fails, a claim-specific intervention can supply signal: a horizontal flip lifts spatial-relation verification from chance ($0.57$ to $0.77$ AUROC). The exact variant is certified (Bonferroni--Clopper--Pearson; zero empirical violations over $300$ splits) and, gated, beats abstention at moderate risk; it also transfers to claims a VLM \emph{generates itself}---on free-form captions ($20.6\%$ hallucinated) it certifies $34\%$ of claims at a guaranteed $\le10\%$ rate vs.\ $24\%$ for abstention. The exact certificate holds across all four open VLMs (LLaVA-1.5, Qwen3-VL, LLaVA-NeXT, InternVL2); its \emph{practical}, uncertified variant additionally improves coverage empirically on POPE while keeping risk near the target.
\end{abstract}

\section{Introduction}

Large vision-language models (LVLMs) produce fluent, helpful descriptions of images, but they also \emph{hallucinate}: they assert objects, attributes, and relations that are not present in the visual input \citep{leng2024mitigating, huang2024opera, guan2024hallusionbench}. For deployment in settings that demand reliability---medical, scientific, assistive---fluency is not enough; what is needed is a \emph{guarantee} that, whatever the model says, the rate of unsupported claims stays below a user-specified level.

Conformal prediction offers exactly this kind of guarantee, and a recent line of work brings it to LVLMs. The strongest instance, \textsc{ConfLVLM} \citep{li2025towards}, treats each generated detail as a hypothesis, scores it with a cheap uncertainty heuristic, and filters out claims whose score falls below a conformally calibrated threshold; the result is a finite-sample, distribution-free bound on the factual error rate of what remains. This is principled and attractive. But it inherits the central weakness of selective prediction: \emph{to be safe, it abstains}. In our experiments, guaranteeing a hallucination rate below $5\%$ forces such a filter to abstain on more than $80\%$ of all claims (Figure~\ref{fig:motivation}). A reliable model that refuses four out of five questions is of limited use.

We start from a simple observation: \emph{abstention is wasteful when more evidence is cheaply available.} When a person is unsure whether a small object is present, they look closer; they do not give up. LVLMs admit the analogous move---re-encoding a zoomed crop, or re-scoring the claim under a targeted visual intervention---at the cost of a few extra forward passes. The question is whether this ``look again'' step can be added \emph{without forfeiting the statistical guarantee} that motivated selective prediction in the first place.

Existing work addresses the two ingredients separately. Conformal generation methods such as \textsc{ConfLVLM} and CAP \citep{li2025towards,tayebati2025learning} provide guarantees for claim filtering or abstention, while CEBC \citep{mishra2026cebc} uses a conformally calibrated detector-evidence threshold to edit or suppress unsupported object mentions; these methods do not adaptively crop, magnify, and re-encode an additional image view. Conversely, \textsc{ReCoVERR} \citep{srinivasan2024selective} elicits additional image-conditioned evidence through LLM-generated follow-up questions, while recent visual re-examination methods magnify or crop task-relevant regions \citep{mao2026magnifying,yi2026beyond}. These methods report empirical improvements rather than a finite-sample, distribution-free guarantee on post-acquisition selective risk. To our knowledge, prior work has not combined adaptive acquisition and re-encoding of additional image views with such a guarantee under a bounded acquisition budget; as we show, naively applying a pre-acquisition threshold to post-acquisition scores is not safe.

We introduce \textbf{Budgeted Conformal Evidence Acquisition (BCEA)}. BCEA replaces the binary answer/abstain decision with a three-way action---answer, abstain, or \emph{acquire}---and makes three contributions.

\paragraph{(1) Acquisition that is naive breaks the guarantee.} Calibrating a conformal threshold on a claim's base score and then, at test time, replacing that score with a higher post-acquisition score is the obvious thing to do, and it is wrong. Acquisition changes the score distribution, breaking the exchangeability between calibration and test on which conformal validity rests. Empirically, naive acquisition raises coverage but violates the risk guarantee on almost every split (Table~\ref{tab:acq}). We make this failure precise.

\paragraph{(2) Re-calibration on post-acquisition scores restores it.} The fix is to fold the entire acquisition policy---crops, interventions, and the gate---into the score and calibrate on the post-acquisition scores, so calibration and test pass through one \emph{sample-independent} map. The principle (folding a fixed, label-independent transformation into a conformal pipeline) is standard; our contribution is to pin down what it requires here---the gate must not depend on the calibration sample, not merely its labels---and to show the resulting exact procedure is theoretically certified while still improving guaranteed coverage over abstention at moderate risk.

\paragraph{(3) Structured interventions beat global ungrounding.} The grounding score of prior conformal filters---how much the claim's likelihood drops when the image is removed---works for object existence but \emph{fails} for relational claims: removing the image lowers ``\emph{A} is left of \emph{B}'' and ``right of'' equally. Cheap claim-type-specific interventions recover the lost signal---a single horizontal flip lifts spatial-relation verification from $0.57$ to $0.77$ AUROC. We further characterize \emph{when} acquisition helps: the optimal coverage at a fixed risk is a point on the score's ROC curve, so lifting the ROC raises it (Theorem~\ref{thm:roc}), which explains why model-guided crops beat a uniform grid while per-claim budget allocation does not. We validate all of this on COCO and POPE existence/relation claims across four open VLMs, with head-to-head baselines.

\begin{figure}[t]
\centering
\includegraphics[width=0.92\columnwidth]{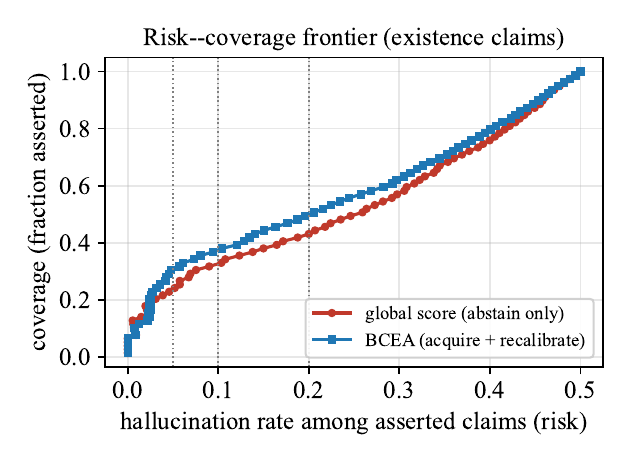}
\caption{Abstention is conservative. \emph{Empirical} risk--coverage frontier on $1{,}440$ balanced object-existence claims (LLaVA-1.5-7B; each point is one threshold, not a certified curve). The abstain-only global score gives up $82\%$ of claims to reach $5\%$ risk; acquisition (BCEA) shifts the frontier up. Finite-sample \emph{certified} coverage (lower) is in Table~\ref{tab:acq}.}
\label{fig:motivation}
\end{figure}

\section{Related Work}

\paragraph{Hallucination in LVLMs.} Object hallucination is measured by POPE \citep{li2023evaluating}, AMBER \citep{wang2023amber}, THRONE \citep{kaul2024throne}, HallusionBench \citep{guan2024hallusionbench}, and TUBench \citep{he2024tubench}. Mitigations include contrastive or multi-scale decoding (VCD \citep{leng2024mitigating}, HALC \citep{chen2024halc}, SECOND \citep{park2025second}), over-trust penalties (OPERA \citep{huang2024opera}), and preference optimization (RLHF-V \citep{yu2024rlhfv}, V-DPO \citep{xie2024vdpo})---improving empirical metrics without a finite-sample, distribution-free guarantee.

\paragraph{Conformal and selective prediction for generation.} Selective prediction trades coverage for controlled risk \citep{geifman2017selective}. Conformal factuality \citep{mohri2024language} and conformal abstention \citep{yadkori2024mitigating} bound LM error by backing off or abstaining. For LVLMs, \textsc{ConfLVLM} \citep{li2025towards} filters unreliable claims with a distribution-free factuality guarantee, CAP \citep{tayebati2025learning} learns per-instance conformal risk levels, and CEBC \citep{mishra2026cebc} edits unsupported mentions via a calibrated detector threshold. None adaptively acquires and re-encodes an additional visual view.

\paragraph{Evidence acquisition and active perception.} \textsc{ReCoVERR} \citep{srinivasan2024selective} elicits LLM follow-up questions as textual evidence under a heuristic tolerance, without a distribution-free guarantee. Zoom/crop and visual-search methods---ViCrop \citep{zhang2023towards}, CropVLM \citep{carvalho2026cropvlm}, V* \citep{wu2024vstar}, AdaptVision \citep{lin2026adaptvision}, Perception Magnifier \citep{mao2026magnifying}, HiPerson \citep{yi2026beyond}---target accuracy or efficiency, not guaranteed risk. BCEA instead treats visual re-examination as a \emph{calibrated verification action} and calibrates the resulting acquire-and-verify pipeline for finite-sample selective-risk control.

\paragraph{Conformal validity under adaptivity.} Folding a fixed, label-independent transformation into a conformal pipeline is known to preserve exchangeability---for test-time augmentation \citep{shanmugam2025test}, compute-budgeted LLM reasoning \citep{wang2026conformal}, joint calibration--test thresholds \citep{xu2025selective}, and monotone-loss risk control \citep{angelopoulos2024conformal}. We instantiate this principle, but contribute (i) that \emph{visual} acquisition silently breaks the guarantee unless the policy is folded into the score, and (ii) acquisition in the pixels via claim-type interventions. To our knowledge BCEA is the first to couple guaranteed VLM hallucination filtering with visual evidence acquisition.

\section{Preliminaries}

\paragraph{Claim-level selective prediction.} A response is decomposed into atomic claims $c_1,\dots,c_m$ \citep{li2025towards}. Each claim $c$ about image $x$ has an unknown binary label $y\in\{0,1\}$ ($1$ = supported by the image). A scoring function $s(x,c)\in\mathbb{R}$ ranks claims by how grounded they appear (higher = more grounded). A selective rule \emph{asserts} $c$ iff $s(x,c)\ge\tau$ and otherwise abstains. We measure
\begin{align}
\text{coverage} &= \Pr[s\ge\tau], &
\text{risk} &= \Pr[y=0 \mid s\ge\tau],
\end{align}
i.e.\ the fraction of claims asserted and the hallucination rate among asserted claims.

\paragraph{Split-conformal selective risk control.} Given a calibration set of i.i.d.\ (or exchangeable) labeled claims and a target level $\alpha$, we choose the smallest threshold $\tau$ (maximizing coverage) such that an upper confidence bound on the calibration risk is at most $\alpha$. For accepted calibration claims with $k$ errors out of $n$, we use the exact Clopper--Pearson upper bound $\overline{R}_\delta(k,n)=\mathrm{Beta}^{-1}_{1-\delta}(k+1,\,n-k)$ (with $\overline{R}_\delta{=}1$ when $n{=}0$ or $k{=}n$). Over a \emph{deterministic} grid $\Lambda$ of thresholds (fixed independently of the calibration sample), the exact procedure selects $\hat\tau=\min\{t\in\Lambda:\overline{R}_{\delta/|\Lambda|}(k(t),n(t))\le\alpha\}$, abstaining if none; the Bonferroni level $\delta/|\Lambda|$ makes this control the selective risk at level $\alpha$ with probability $1-\delta$ on exchangeable test claims. Scanning the calibration scores at level $\delta$ \emph{without} this correction gives the higher-coverage but \emph{uncertified} practical variant.

\section{Method: BCEA}

\begin{figure*}[t]
\centering
\vspace{-5mm}
\includegraphics[width=\textwidth]{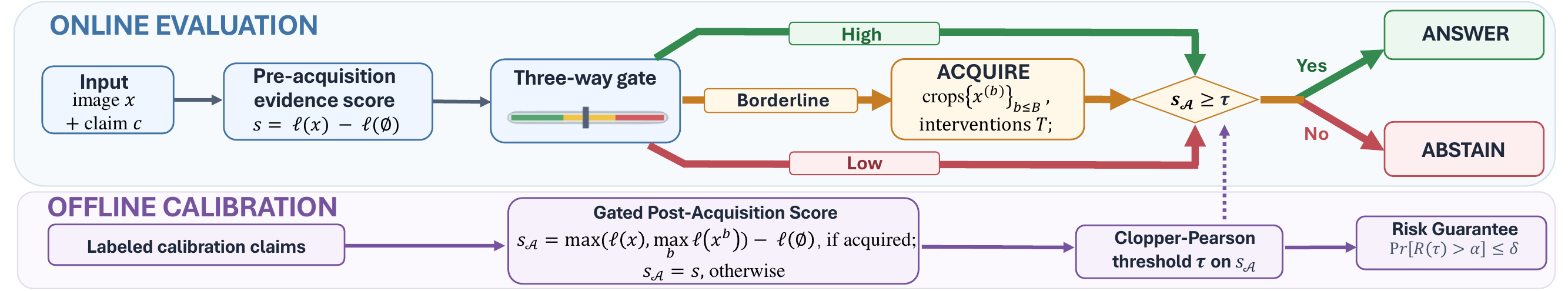}
\caption{BCEA replaces answer/abstain with a three-way decision. A claim is scored by evidence sufficiency; borderline claims trigger \emph{acquisition} (zoom crops / claim-type interventions), producing a post-acquisition score $s_{\mathcal A}$. The conformal threshold $\tau$ is calibrated \emph{on $s_{\mathcal A}$ itself}, so that asserted claims carry a distribution-free hallucination-rate guarantee $\le\alpha$ (Theorem~\ref{thm:valid}). Calibrating $\tau$ on the pre-acquisition score instead silently breaks the guarantee (Proposition~\ref{prop:naive}).}
\label{fig:arch}
\end{figure*}

BCEA turns a binary answer/abstain filter into a guaranteed, evidence-seeking one (Figure~\ref{fig:arch}). We first define the scores, then the acquisition action, the validity argument, and a characterization of when acquisition helps.

\subsection{Evidence-Sufficiency Scores}

\paragraph{Global ungrounding.} Following the intuition behind prior conformal filters, a claim is grounded if its likelihood depends on the image. With $\ell(v)=\frac{1}{|c|}\sum_t \log p_\theta(c_t\mid v, \text{prompt})$ the mean token log-likelihood of claim $c$ under visual input $v$, the \emph{global} score is
\begin{equation}
s_{\text{glob}}(x,c) = \ell(x) - \ell(\varnothing),
\end{equation}
where $\varnothing$ is a blank (all-black) image. A hallucinated claim barely changes when the image is removed ($s_{\text{glob}}\approx 0$); a grounded claim drops sharply.

\paragraph{Region and intervention scores.} $s_{\text{glob}}$ throws the whole image away and cannot tell \emph{where} the evidence is, nor distinguish claims that depend on visual \emph{structure}. We replace the blank reference with a targeted intervention $T$ chosen by claim type, $s_T(x,c)=\ell(x)-\ell(T(x))$: region masking ($T$ blacks out the queried object's support) for existence/attribute claims, and a horizontal flip ($T(x)=\text{flip}(x)$) for left/right relational claims, under which a \emph{correct} directional claim becomes false (so $s_T>0$) while an \emph{incorrect} one becomes true (so $s_T<0$).

Figure~\ref{fig:qual} illustrates both score families on real images with measured likelihoods.

\subsection{The Acquisition Action}

When $s(x,c)$ is too low to assert but not clearly ungrounded, BCEA \emph{acquires}: it forms a set of $B$ zoomed views $\{x^{(1)},\dots,x^{(B)}\}$ of $x$ (a coarse grid of overlapping crops, requiring no ground-truth localization) and forms the post-acquisition score
\begin{equation}
s_{\text{acq}}(x,c) = \max\!\big(\ell(x),\ \max_{b\le B}\ell(x^{(b)})\big) - \ell(\varnothing).
\label{eq:sacq}
\end{equation}
For a true object, some crop magnifies it and raises its likelihood; for a hallucinated object, no crop provides support. The maximization is the source of both the benefit and the danger analyzed next.

The three-way answer/abstain/\emph{acquire} decision is realized by \emph{gating}: run the $B$ crops only when the base grounding $s$ falls inside a borderline band, and answer/abstain directly otherwise. Validity here needs care, and label-free is \emph{not} enough: were the band a \emph{quantile} of the calibration scores, each calibration claim's policy would depend on the calibration sample containing it, and calibration and test would no longer pass through one fixed map---Bonferroni over $(\text{band},\tau)$ would not repair that. We therefore define bands by \emph{pre-fixed numeric} bounds on the base score (e.g.\ $[-1,1]$), taken from the same a-priori grid and never from a data quantile, so $s_{\mathcal A}$ is sample-independent and Theorem~\ref{thm:valid} applies verbatim. Figure~\ref{fig:gate} plots the compute--coverage trade-off across pre-fixed bands: a narrow band ($\approx2$ extra passes) reaches guaranteed coverage $\approx0.26$, already above full budget ($0.19$), because acquiring on every claim inflates the score-max for false claims too, which the conservative certificate penalizes. \emph{Adaptively} selecting band \emph{and} threshold per split under a single Bonferroni correction over all (band,$\tau$) pairs---which keeps the guarantee valid---reaches $0.32$ at $\alpha{=}0.10$, above No-Acq ($0.23$); this is the gated-selected row of Table~\ref{tab:acq}. Choosing the band on an independent pilot split instead gives the same conclusion ($0.23/0.41$ at $\alpha{=}0.10/0.20$, violation $0$).

\subsection{Why Naive Acquisition Is Invalid---and How to Fix It}

The results below are elementary consequences of one principle---a fixed, label-independent transformation may be folded into a conformal pipeline without breaking coverage---which we use to pin down \emph{which} step of visual acquisition violates it; our contribution is empirical, and the takeaway is that the gate must be calibration-sample-independent, not merely label-free. Concretely, let $\tau_\alpha$ be calibrated on the base scores $s_{\text{glob}}$; the tempting deployment is to assert iff $s_{\text{acq}}(x,c)\ge\tau_\alpha$ (acquire evidence, keep the old threshold), and it is invalid: the threshold was calibrated on $s_{\text{glob}}$ but the decision uses $s_{\text{acq}}\ge s_{\text{glob}}$ (Eq.~\ref{eq:sacq}), so the acceptance region is enlarged for \emph{all} claims---hallucinated ones included---and the realized risk exceeds $\alpha$.

\begin{proposition}[Naive acquisition is anti-conservative]
\label{prop:naive}
Since $s_{\text{acq}}\ge s_{\text{glob}}$ pointwise, asserting on $\{s_{\text{acq}}\ge\tau_\alpha\}$ with $\tau_\alpha$ calibrated on $s_{\text{glob}}$ accepts a superset of $\{s_{\text{glob}}\ge\tau_\alpha\}$. The level-$\alpha$ guarantee certified for the latter does \emph{not} transfer to this larger set, whose selective risk need not be bounded by $\alpha$; whenever the extra accepted claims contain label-$0$ claims at a rate above $\alpha$, the realized risk exceeds $\alpha$.
\end{proposition}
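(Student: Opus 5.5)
The plan has three parts: establish the pointwise inequality, derive the set inclusion from it, and express the new risk as a mixture. For the first, fix the visual inputs. Eq.~\eqref{eq:sacq} takes a maximum over a set that contains $\ell(x)$, so $\max(\ell(x),\max_b \ell(x^{(b)}))\ge \ell(x)$. Subtracting the common reference $\ell(\varnothing)$ from both sides gives $s_{\text{acq}}(x,c)\ge s_{\text{glob}}(x,c)$ for every $(x,c)$. Consequently $s_{\text{glob}}\ge\tau_\alpha$ implies $s_{\text{acq}}\ge\tau_\alpha$, so the region $S_{\text{acq}}=\{s_{\text{acq}}\ge\tau_\alpha\}$ contains $S_{\text{glob}}=\{s_{\text{glob}}\ge\tau_\alpha\}$. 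The difference is the set $D=S_{\text{acq}}\setminus S_{\text{glob}}$ of claims promoted by acquisition.

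Next I decompose the selective risk on $S_{\text{acq}}$ as a mixture. Write $p=\Pr[D]/\Pr[S_{\text{acq}}]$. Then
\begin{equation*}
\Pr[y=0\mid S_{\text{acq}}]=(1-p)\,\Pr[y=0\mid S_{\text{glob}}]+p\,\Pr[y=0\mid D].
\end{equation*}
The calibration guarantee recalled in the Preliminaries is a statement about the event $S_{\text{glob}}$ only, with $\tau_\alpha$ a function of the calibration $s_{\text{glob}}$ scores. It says nothing about $\Pr[y=0\mid D]$, because the scores $s_{\text{acq}}$ on which the decision is now made never entered calibration. The exchangeability argument therefore does not apply to the new acceptance event. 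If $\Pr[y=0\mid D]>\alpha$ and the base risk is at or near $\alpha$, the mixture exceeds $\alpha$. More precisely, if $\Pr[y=0\mid D]>\alpha\ge \Pr[y=0\mid S_{\text{glob}}]$, then the risk exceeds $\alpha$ whenever $p(\Pr[y=0\mid D]-\alpha)>(1-p)(\alpha-\Pr[y=0\mid S_{\text{glob}}])$. The clean sufficient case is when the base risk already sits at $\alpha$ and $\Pr[y=0\mid D]>\alpha$ with $p>0$.

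To show the guarantee ``need not'' hold, I would build a small explicit distribution. Take two point masses of claims. Type A has $s_{\text{glob}}=s_{\text{acq}}=1$ and label $1$ with probability $1-\alpha$. Type B is all hallucinated, with $s_{\text{glob}}=-1$ but $s_{\text{acq}}=1$, modelling a crop that spuriously raises the likelihood of a false claim. Any calibrated $\tau_\alpha\in(-1,1]$ accepts only type A under $s_{\text{glob}}$, so its risk is about $\alpha$. Under $s_{\text{acq}}$ both types are accepted, and the risk strictly exceeds $\alpha$.

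The main obstacle is the informal phrase ``whenever the extra accepted claims contain label-$0$ claims at a rate above $\alpha$.'' Taken literally, this condition is not sufficient if the base region's risk is strictly below $\alpha$. I would make it precise through the mixture inequality above and state the conclusion under the condition that the base risk is at $\alpha$, which is the typical tight case produced by a coverage-maximizing threshold. Alternatively, I would read ``rate'' as the error rate of the pooled set.
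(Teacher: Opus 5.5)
Your proposal is correct and follows the paper's proof. Both arguments use the pointwise inequality $s_{\text{acq}}\ge s_{\text{glob}}$ to get the superset, then write the deployed risk as the mixture $(1-w)\,R_{s_{\text{glob}}}(\tau_\alpha)+w\,\Pr[Y{=}0\mid E']$ over the newly admitted claims $E'$, where calibration certifies only the first term.

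Your caveat is also well founded. The paper's proof concludes that $\Pr[Y{=}0\mid E']>\alpha$ alone pushes the risk above $\alpha$ ``whenever $w>0$'', but this fails when $R_{s_{\text{glob}}}(\tau_\alpha)<\alpha$. Your condition $w\,(\Pr[Y{=}0\mid E']-\alpha)>(1-w)\,(\alpha-R_{s_{\text{glob}}}(\tau_\alpha))$ is the correct statement.

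One small fix to your two-atom example: give type A an error rate strictly below $\alpha$ and give type B enough mass. With type A's error rate at exactly $\alpha$, the exact certificate would almost always abstain instead of returning a threshold in $(-1,1]$.
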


Because $s_{\mathcal A}^{(B)}$ is monotone in $B$ (Prop.~\ref{prop:mono}, App.~A), more budget clears more true \emph{and} more false claims at a fixed bar, so the risk creeps up unless $\tau$ is re-raised. The fix is to make calibration and test traverse an identical pipeline: treat the acquisition policy $\mathcal{A}$ as part of the score (Eq.~\ref{eq:sacq}) and calibrate the threshold on the post-acquisition scores.

\begin{theorem}[Acquisition-adaptive validity]
\label{thm:valid}
Fix an acquisition policy $\mathcal{A}$ \emph{before observing the calibration sample}, mapping each $(x,c)$ to a post-acquisition score $s_{\mathcal{A}}(x,c)$ that depends only on $(x,c)$ and the model---independent of the entire calibration set, not merely its labels. If the calibration claims are i.i.d., then certifying a deterministic grid $\Lambda$ of thresholds (specified independently of the calibration sample) with a Bonferroni-corrected Clopper--Pearson bound (each at level $\delta/|\Lambda|$) and asserting on $\{s_{\mathcal{A}}\ge\hat\tau\}$ for the smallest certified $\hat\tau$ controls the selective risk at level $\alpha$ with probability $1-\delta$ on the test claims (App.~A). This is the \emph{exact} variant used in Table~\ref{tab:acq}; a held-out select-then-certify split is an equivalent alternative.
\end{theorem}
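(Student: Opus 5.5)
The plan is to reduce the statement to a standard learn-then-test / Bonferroni argument over a fixed finite grid. The key structural step is that, because $\mathcal{A}$ is fixed before the calibration sample is seen and $s_{\mathcal{A}}(x,c)$ depends only on $(x,c)$ and the model, the transformed pairs $(s_{\mathcal{A}}(x_i,c_i),y_i)$ for $i=1,\dots,n$ together with the test pair are i.i.d. draws from one fixed distribution $P_{\mathcal{A}}$ on $\mathbb{R}\times\{0,1\}$. This is exactly the point where a quantile-defined gate would fail: there the map applied to calibration point $i$ would depend on the other calibration points, so the images would no longer be i.i.d. I will spell this out first, since everything afterwards treats $s_{\mathcal{A}}$ as an ordinary fixed score.

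Next, for each $t\in\Lambda$ I define the true selective risk $R(t)=\Pr_{P_{\mathcal{A}}}[y=0\mid s_{\mathcal{A}}\ge t]$. Conditional on the number $n(t)$ of calibration claims with $s_{\mathcal{A}}\ge t$, the i.i.d. structure makes those claims i.i.d. from the conditional law given $s_{\mathcal{A}}\ge t$. Their error count is therefore $k(t)\sim\mathrm{Bin}(n(t),R(t))$. By exactness of the Clopper--Pearson bound, $\Pr[R(t)>\overline{R}_{\delta/|\Lambda|}(k(t),n(t))\mid n(t)]\le\delta/|\Lambda|$. The degenerate cases $n(t)=0$ or $k(t)=n(t)$ hold trivially since the bound is $1$. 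Integrating over $n(t)$ gives the same bound unconditionally. Since $\Lambda$ is deterministic and independent of the sample, a union bound shows that with probability at least $1-\delta$ the inequality $R(t)\le\overline{R}_{\delta/|\Lambda|}(k(t),n(t))$ holds simultaneously for all $t\in\Lambda$.

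On this good event, any $t$ the procedure certifies satisfies $R(t)\le\overline{R}\le\alpha$, and in particular so does $\hat\tau$. This holds even though $\hat\tau$ is data-dependent, because the control is simultaneous over all of $\Lambda$. If no $t$ is certified, the procedure abstains and risk is vacuous. The test claims are independent of calibration and drawn from $P_{\mathcal{A}}$, so asserting on $\{s_{\mathcal{A}}\ge\hat\tau\}$ has selective risk $R(\hat\tau)\le\alpha$ with probability $\ge 1-\delta$ over calibration.

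For the held-out select-then-certify alternative, I would choose $t$ on a pilot split and then apply a single Clopper--Pearson bound at level $\delta$ on an independent split; the same conditioning argument applies with $|\Lambda|=1$. For the gated-selected variant, the grid is enlarged to (band, $\tau$) pairs and the union bound is taken over that larger grid.

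The main obstacle I anticipate is the conditioning step. One must argue carefully that, given $n(t)$, the accepted calibration labels are i.i.d. Bernoulli$(R(t))$ with the same $R(t)$ that governs test acceptance. This needs both that $t$ is non-random and that $s_{\mathcal{A}}$ is the same fixed map on calibration and test. I would handle it by writing the joint law of the $n$ i.i.d. pairs and observing that the indicators $\mathbf 1\{s_{\mathcal{A}}\ge t\}$ and the errors among accepted points factorize.
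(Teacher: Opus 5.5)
Your proposal is correct and follows the paper's proof essentially step for step. You push the i.i.d.\ sample through the fixed map $s_{\mathcal A}$, which is the paper's lemma on label-independent maps transporting exchangeability, and obtain $k(t)\mid n(t)\sim\mathrm{Binomial}(n(t),R(t))$ at each fixed grid point; a Clopper--Pearson bound at level $\delta/|\Lambda|$ plus a union bound over $\Lambda$ then gives $R(\hat\tau)\le\alpha$ with probability at least $1-\delta$ for the data-dependent $\hat\tau$, and your held-out split and (band, $\tau$) grid treatments match the paper's alternative construction and its remark on policy classes.
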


\emph{Proof sketch.} Applying the fixed map $\mathcal{A}$ to every claim preserves exchangeability of the pairs $(s_{\mathcal{A}}(x_i,c_i),y_i)$. The result is then standard split-conformal selective risk control with score $s_{\mathcal{A}}$ in place of $s$. $\square$

Theorem~\ref{thm:valid} instantiates a known principle---folding a fixed, label-independent transformation into the conformal pipeline keeps coverage \citep{shanmugam2025test, wang2026conformal}. We state it because the \emph{failure mode it rules out} (Proposition~\ref{prop:naive}) is, for visual acquisition, both easy to commit and empirically severe, as we show next.

Because it asks only that $\mathcal{A}$ be fixed independently of the calibration sample, Theorem~\ref{thm:valid} also permits acquisition that is \emph{sequential and adaptive within a claim}---``look, judge, look again, stop when confident'' is admissible with no correction (Prop.~\ref{prop:seq}), the stopping rule being absorbed into one deployed score. Tuning the policy class on calibration data (e.g.\ selecting $B$ or the band) is handled by a held-out split or Learn-then-Test over the finite policy grid \citep{angelopoulos2025learn}. The practical prescription is simply: \emph{calibrate on the scores you will actually deploy.}

\begin{algorithm}[t]
\caption{BCEA: calibration and deployment}
\label{alg:bcea}
\textbf{Input:} calibration claims $\{(x_i,c_i,y_i)\}_{i=1}^n$, level $\alpha$, confidence $\delta$, pre-fixed borderline band $[\tau^-,\tau^+]$, and $B$ crop views $x^{(1)},\dots,x^{(B)}$ of $x$
\begin{algorithmic}[1]
\STATE \textbf{function} \textsc{Score}$(x,c)$: \hfill // realizes the three-way answer/abstain/\emph{acquire}
\STATE \quad $\ell_0\leftarrow\ell(\varnothing,c)$;\ \ $s\leftarrow \ell(x,c)-\ell_0$ \hfill // base grounding
\STATE \quad \textbf{if} $s\in[\tau^-,\tau^+]$: \ \textbf{for} $b=1\dots B$: \ $s\leftarrow\max\big(s,\ \ell(x^{(b)},c)-\ell_0\big)$ \hfill // borderline $\Rightarrow$ acquire crop views; else answer/abstain directly
\STATE \quad \textbf{return} $s$
\STATE // \emph{Calibration} ($s_i\leftarrow\textsc{Score}(x_i,c_i)$ for all $i$)
\STATE \textbf{Exact:} $\tau\leftarrow\min\{t\in\Lambda:\ \overline{R}_{\delta/|\Lambda|}(\#\{y_i{=}0,s_i{\ge}t\},\,\#\{s_i{\ge}t\})\le\alpha\}$ over a fixed grid $\Lambda$
\STATE \textbf{Practical:} $\tau\leftarrow$ smallest $t$ with $\overline{R}_{\delta}(\#\{y_i{=}0,s_i{\ge}t\},\#\{s_i{\ge}t\})\le\alpha$ (scan calibration scores)
\STATE // \emph{Deployment} on a new claim $(x,c)$
\STATE \textbf{return} \textsc{assert} if $\textsc{Score}(x,c)\ge\tau$ \textbf{else} \textsc{abstain}
\end{algorithmic}
\end{algorithm}

Algorithm~\ref{alg:bcea} states BCEA in full. The \textsc{Score} function realizes the three-way decision by \emph{gating}: it runs the $B$ acquisition views only when the base grounding $s$ falls in the pre-fixed borderline band $[\tau^-,\tau^+]$ (a label-free, calibration-independent test), and answers/abstains on $s$ directly otherwise---so only borderline claims spend budget. The \textbf{Exact} branch then certifies a \emph{fixed}, label-free grid $\Lambda$ with a Bonferroni-corrected Clopper--Pearson bound, giving $\Pr[R(\hat\tau)>\alpha]\le\delta$ exactly (Thm~\ref{thm:valid}); the \textbf{Practical} branch scans the calibration scores for higher coverage at a mild validity cost. Either way $\tau$ is calibrated on \textsc{Score}, which already includes acquisition, so deployment and calibration see the same map---the acquisition-adaptive condition. Algorithm~\ref{alg:bcea} shows the \emph{crop} instantiation, whose acquisition score is $s_{\text{acq}}$ (Eq.~\ref{eq:sacq}); a claim-type \emph{intervention} uses the \emph{same} calibration but replaces the crop score with $s_T(x,c)=\ell(x,c)-\ell(T(x),c)$ (e.g.\ the horizontal flip), which is likewise a fixed label-free map and certifies identically---the flip signal is a likelihood \emph{difference}, not an extra view folded into the max.

\subsection{When Does Acquisition Help?}

Theorem~\ref{thm:valid} guarantees \emph{safety} but not \emph{improvement}. At the \emph{population} level, the optimal coverage under a selective-risk constraint is a point on the score's ROC curve (Prop.~\ref{prop:rocpoint}); ROC dominance is \emph{sufficient} for coverage dominance in general and, for concave or convexified ROCs, also \emph{necessary} (Thm.~\ref{thm:roc}), with a Blackwell-informativeness sufficient condition. Finite-sample \emph{certified} coverage can show small inversions from calibration noise.

This makes ``does acquisition help'' measurable and predicts the experiments: acquisition lifts AUROC $0.82\!\to\!0.86$ (grid) $\to\!0.88$ (guided, Table~\ref{tab:guided}), giving the coverage ordering no-acq.\ $<$ grid $<$ guided at most risk levels (finite-sample inversions possible at the smallest $\alpha$). The same lens explains our one deliberate \emph{negative}: per-claim budget \emph{allocation} is coverage-neutral here, because under a single global threshold an extra look raises true- and false-claim scores alike while labels are unknown at allocation time, so any label-agnostic allocation is to first order coverage-neutral (App.~A). We therefore do not claim allocation as a contribution.

\section{Experiments}

\paragraph{Setup.} We use LLaVA-1.5-7B \citep{liu2024improved} (main) and Qwen3-VL-7B \citep{bai2025qwen3} (generality), and construct claims from COCO val2017 ground-truth instance annotations \citep{lin2014coco} (no extra labeling). \emph{Existence} claims pair each image with present objects (true) and sampled absent objects (false); \emph{spatial} claims pair the two largest, horizontally separated objects as ``\emph{A} is to the left/right of \emph{B}'', with the swapped direction as the false claim. Each claim score is one forward pass; acquisition adds $B{=}5$ grid-crop passes. For the headline guarantee we use \emph{one claim per image} ($3{,}000$ images), so calibration claims are i.i.d.\ and the Clopper--Pearson binomial argument is exact; a $50/50$ calibration/test split is repeated $300$ times, $\delta{=}0.1$. We report two variants: \textbf{exact} (a Bonferroni-corrected certificate over a fixed threshold grid, which is provably valid) and \textbf{practical} (a single-set scan, which reaches higher coverage but is mildly anti-conservative). We report coverage and, as an empirical diagnostic, the \emph{violation frequency}: the fraction of splits whose held-out \emph{test} risk $\hat R_{\text{test}}(\hat\tau)$ exceeds $\alpha$. This estimates the realized risk and is \emph{distinct} from the theoretical false-certification probability $\Pr_{\mathcal D_{\text{cal}}}[R(\hat\tau)>\alpha]\le\delta$ that Theorem~\ref{thm:valid} bounds (a finite test set can exceed $\alpha$ by chance even when $R(\hat\tau)\le\alpha$). We report it as a complementary empirical diagnostic: a zero value indicates the certified thresholds are conservative on these splits, but it should not be read as an estimate of the theoretical false-certification probability. Multi-claim-per-image and per-model/POPE tables below use the practical variant with \emph{claim-level} splits, so multiple claims from one image can share calibration and test; those numbers are therefore empirical rather than exactly certified. The image-level, one-claim-per-image \emph{certified} counterpart---for all four backbones, with zero violations throughout---is Table~\ref{tab:backbone} in App.~C.

\subsection{Guaranteed Abstention Is Conservative}

Figure~\ref{fig:motivation} shows the \emph{empirical} (achievable, in-sample) risk--coverage frontier on $1{,}440$ balanced existence claims. On this frontier, to hold a hallucination rate $\le5\%$ the global-score filter asserts only $18\%$ of claims; even at a lenient $\le20\%$ it asserts $40\%$. This is the cost of abstention without acquisition, and the motivation for BCEA; the corresponding finite-sample \emph{certified} numbers (uniformly lower) are in Table~\ref{tab:acq}. The evidence-sufficiency score $s_{\text{glob}}$ already separates grounded from hallucinated claims (AUROC $0.838$) far better than raw claim likelihood ($0.808$), confirming that image-dependence is the right signal; acquisition raises it to $0.879$.

\subsection{Evidence Is Localized}

Before using acquisition we verify the grounding signal is spatially localized. On $800$ true existence claims, masking the object's ground-truth region lowers the claim log-likelihood by $0.109$ on average while masking an equal-area irrelevant region does not (paired difference $+0.159$, $67\%$ win rate, $p\ll10^{-10}$); an occlusion-sensitivity heatmap concentrates on the object (App.~C). Figure~\ref{fig:qual}(a) shows the effect on a real claim---the model's confidence depends specifically on the supporting pixels, the premise behind region- and intervention-based scoring.

\begin{figure}[t]
\centering
\vspace{-2mm}
\includegraphics[width=\columnwidth]{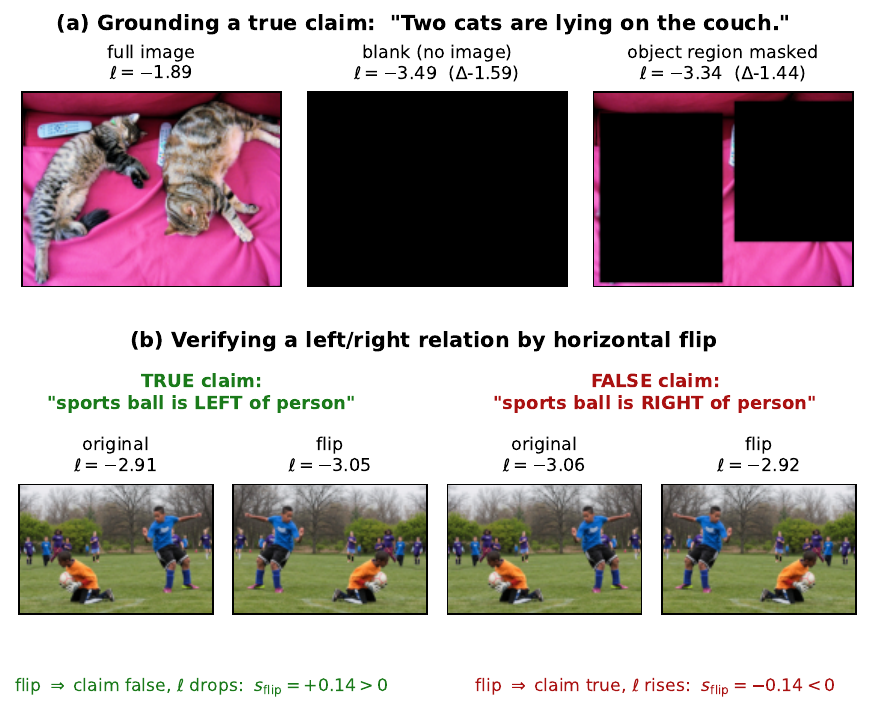}
\caption{Evidence sufficiency on real images (LLaVA-1.5, mean claim log-likelihood $\ell$). \textbf{(a)}~A true existence claim loses likelihood when the image is removed ($\Delta{-}1.6$) or the object region is masked ($\Delta{-}1.4$)---its confidence is grounded in those pixels. \textbf{(b)}~A left/right relation is verified by a horizontal flip: the \emph{correct} ``left of'' claim drops under flipping ($s_{\text{flip}}{=}{+}0.14$) while the \emph{swapped} ``right of'' claim rises ($s_{\text{flip}}{=}{-}0.14$). The global blank-image score, which leaves both relations equally likely, cannot provide this signal.}
\label{fig:qual}
\end{figure}

\subsection{Acquisition: Naive Breaks, BCEA Restores}

Table~\ref{tab:acq} is the central result ($3{,}000$ existence claims, one per image so the Clopper--Pearson argument is exact; $300$ random $50/50$ splits). \textbf{Exact} BCEA is proven valid by Theorem~\ref{thm:valid}, whose guarantee is the false-certification probability $\le\delta$. As a diagnostic we report the empirical violation frequency (test-set exceedance, defined above): it is $0.00$ at every $\alpha$. A finite test set could in principle exceed $\alpha$ by chance even at $R(\hat\tau){=}\alpha$, so this is not the theoretical quantity itself; but $0.00$ shows the certified threshold is \emph{conservative}---its realized risk stays below $\alpha$ on every split---consistent with the bound. The \emph{gated-selected} form (band and threshold chosen under one Bonferroni correction, itself valid) gives the strongest guaranteed coverage and clearly beats abstention at $\alpha{\ge}0.10$ ($0.32$ vs.\ $0.23$; $0.42$ vs.\ $0.28$), at $\approx2$ extra passes/claim. At $\alpha{=}0.05$ its $0.06$ sits below standalone No-Acq's $0.14$, but only because it searches a larger policy family and pays the Bonferroni price: under a \emph{matched} correction No-Acq also drops to $0.04$, and since a never-acquire band is a candidate, gating weakly dominates No-Acq at every $\alpha$ ($0.06/0.32/0.42$ vs.\ $0.04/0.23/0.24$); the sub-$5\%$ regime is simply evidence-limited. \textbf{Practical} BCEA reaches higher coverage ($0.32/0.43/0.52$) but is anti-conservative (violation $0.21/0.11/0.16$, up to $2\times$ target), so we do not call it guaranteed. The unguaranteed acquirers fail outright: \textsc{Naive} violates on \emph{every} split and the ReCoVERR-style \textsc{Emp.-thr.}\ about half. The precise claim is thus narrow: \emph{with a correct gating policy, certified acquisition improves guaranteed coverage at $\alpha{\ge}0.10$}, while the practical variant buys more coverage at a quantified validity cost.

\begin{table}[t]
\centering
\setlength{\tabcolsep}{5pt}
{\small\begin{tabular}{lccc}
\toprule
& $\alpha{=}0.05$ & $\alpha{=}0.10$ & $\alpha{=}0.20$ \\
method & cov/viol & cov/viol & cov/viol \\
\midrule
No-Acq \emph{(exact, $\delta/|\Lambda|$)} & $.14$/$.00$ & $.23$/$.00$ & $.28$/$.00$ \\
No-Acq \emph{(exact, matched corr.)} & $.04$/$.00$ & $.23$/$.00$ & $.24$/$.00$ \\
BCEA full-budget \emph{(exact)} & $.07$/$.00$ & $.22$/$.00$ & $.41$/$.00$ \\
\textbf{BCEA gated \emph{(exact)}} & $.06$/$.00$ & $\mathbf{.32}$/$.00$ & $\mathbf{.42}$/$.00$ \\
\midrule
BCEA \emph{(practical)} & $.32$/$\mathbf{.21}$ & $.43$/$.11$ & $.52$/$\mathbf{.16}$ \\
Naive & $.47$/$\mathbf{1.0}$ & $.60$/$\mathbf{1.0}$ & $.80$/$\mathbf{1.0}$ \\
Emp.-thr. & $.37$/$\mathbf{.55}$ & $.45$/$\mathbf{.47}$ & $.55$/$\mathbf{.48}$ \\
\bottomrule
\end{tabular}}
\caption{Acquisition on existence claims (LLaVA-1.5; $3{,}000$ one-per-image claims; coverage\,/\,violation frequency; $300$ splits; $\delta{=}0.1$). \emph{Exact} variants are certified by Theorem~\ref{thm:valid}; the $0$ empirical exceedance is a diagnostic, not the certified quantity. \emph{Gated} (band + threshold under one Bonferroni correction, $7$ bands $\times$ $17$ thresholds) gives the strongest guaranteed coverage and, containing a \emph{never-acquire} band, weakly dominates No-Acq \emph{at a matched correction}---so its lower $\alpha{=}0.05$ value is the price of a larger search, not harm. The \emph{practical} variant trades validity (anti-conservative, up to $2\times$ target) for coverage. \textbf{Bold} violations exceed $\delta$; \textsc{Naive}/\textsc{Emp.-thr.} fail.}
\label{tab:acq}
\end{table}

\begin{figure}[t]
\centering
\vspace{-4mm}
\includegraphics[width=0.5\textwidth]{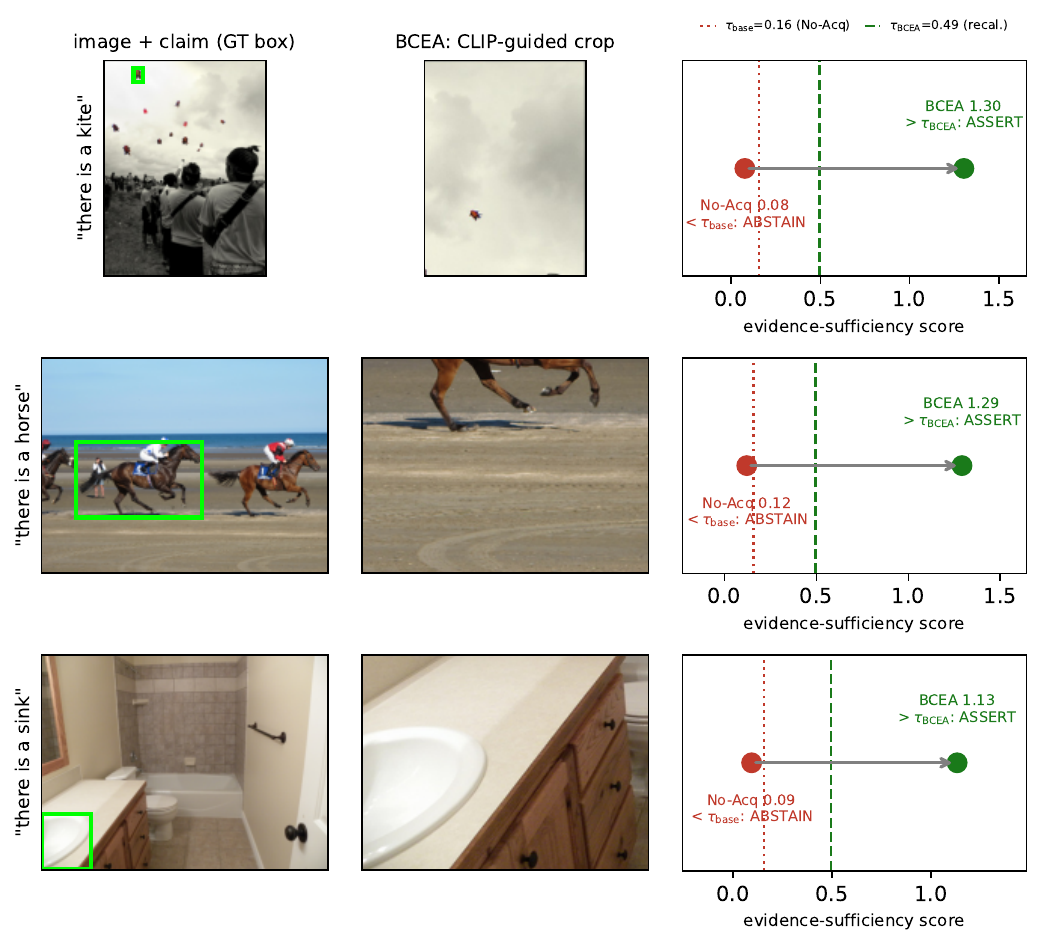}
\caption{\textbf{Rescued claims} (LLaVA-1.5, $\alpha{=}0.10$; GT box green). The two decisions use \emph{different} thresholds: No-Acq compares $s_{\text{glob}}$ to $\tau_{\text{base}}{=}0.16$ (red dotted), while BCEA re-calibrates and compares $s_{\text{acq}}$ to the \emph{higher} $\tau_{\text{BCEA}}{=}0.49$ (green dashed)---the recalibration that keeps the guarantee valid (Prop.~\ref{prop:naive}). Each base score abstains, yet a single CLIP-guided crop lifts it above $\tau_{\text{BCEA}}$ (assert)---claims a guaranteed filter would otherwise leave unanswered.}
\label{fig:rescue}
\vspace{-2mm}
\end{figure}

Figure~\ref{fig:rescue} makes the gain concrete: claims a guaranteed filter must abstain on are recovered, with evidence, after one targeted look (Table~\ref{tab:rescue} traces the decision flow, including a spatial case the global score cannot see but a flip intervention can).

\subsection{Structured Interventions for Relational Claims}

On $996$ balanced spatial-relation claims the global ungrounding score is near chance ($0.574$ AUROC): both objects are present, so blanking the image penalizes correct and swapped relations equally. A single horizontal-flip intervention---under which a correct ``left of'' claim becomes false---lifts verification to $0.765$ (Fig.~\ref{fig:qual}(b); Table~\ref{tab:spatial}, App.~C), with the predicted sign pattern (mean $s_{\text{flip}}$: $+0.034$ true vs.\ $-0.066$ false). Claim-type-specific interventions recover a signal that global ungrounding cannot see, and feed the same acquisition-and-recalibration machinery.

A valid intervention must satisfy two conditions: it must \emph{flip the claim's truth value} and \emph{keep the image in-distribution}. This is not automatic. The analogous vertical flip for above/below relations \emph{fails} (AUROC $0.50$): turning an image upside-down is out-of-distribution, so the model's likelihoods become uninformative rather than truth-reversing. Designing interventions is thus a modeling step---horizontal flip works for left/right precisely because a mirrored scene remains plausible---and a principled intervention library per claim type is an important direction the framework opens.

\subsection{Model-Guided Acquisition}

A uniform crop grid is blind to where the object is. We instead let CLIP \citep{radford2021learning} propose crops: among a multi-scale grid of candidate windows we keep the five whose CLIP embedding is most similar to ``a photo of a $\langle$object$\rangle$'' (Figure~\ref{fig:crop}), using no ground-truth location. Table~\ref{tab:guided} compares no acquisition, the uniform grid, and CLIP-guided crops on the same claims. Guided acquisition raises detection AUROC ($0.824\!\to\!0.862\!\to\!0.882$) and coverage at every level (e.g.\ $0.28\!\to\!0.33\!\to\!0.37$ at $\alpha{=}0.10$). The mechanism is exactly the one the grid missed: guided crops nearly \emph{double} the acquisition gain on \emph{small} objects ($0.25\!\to\!0.43$, Figure~\ref{fig:sizegain}), which a fixed grid tends to crop away, while leaving large-object gains unchanged. Allocating a fixed total budget greedily by marginal gain rather than uniformly makes no measurable difference here ($\le0.2$ coverage points), consistent with the roughly homogeneous per-claim gains on existence claims; we expect allocation to matter when claim difficulty is heterogeneous.

\begin{table}[t]
\centering
\setlength{\tabcolsep}{4pt}
{\small\begin{tabular}{lccc}
\toprule
 & no acq.\ & uniform grid & CLIP-guided \\
\midrule
AUROC & $0.824$ & $0.862$ & $\mathbf{0.882}$ \\
coverage $@\,\alpha{=}0.05$ & $0.19$ & $0.18$ & $\mathbf{0.22}$ \\
coverage $@\,\alpha{=}0.10$ & $0.28$ & $0.33$ & $\mathbf{0.37}$ \\
coverage $@\,\alpha{=}0.20$ & $0.39$ & $0.43$ & $\mathbf{0.47}$ \\
\bottomrule
\end{tabular}}
\caption{Model-guided vs.\ uniform acquisition (LLaVA, existence claims, practical variant, risk near $\alpha$). Targeting crops with CLIP improves both detection and coverage (practical variant).}
\label{tab:guided}
\end{table}

\subsection{Scaling to the POPE Benchmark}

To test BCEA beyond our COCO-constructed claims, we run it on the standard POPE hallucination benchmark \citep{li2023evaluating}---its \textsc{random}, \textsc{popular}, and \textsc{adversarial} splits, which sample absent objects with increasing difficulty---across \emph{four} open VLMs: LLaVA-1.5-7B \citep{liu2024improved}, Qwen3-VL-7B \citep{bai2025qwen3}, LLaVA-NeXT-7B \citep{liu2024llavanext}, and InternVL2-8B \citep{chen2024internvl} ($\sim$$11.5$k claims total). Table~\ref{tab:pope} reports detection AUROC before/after acquisition and (practical) coverage at $\alpha{=}0.10$. The pattern is uniform across all twelve settings: acquisition improves AUROC for every model and split (most strikingly LLaVA-NeXT on \textsc{random}, $0.56\!\to\!0.72$), and BCEA's coverage exceeds abstention everywhere---often two-to-fourfold---while its $90$th-percentile risk tracks the target. Coverage falls from \textsc{random} to \textsc{adversarial} exactly as the benchmark's difficulty rises, and acquisition helps most where the base score is weakest. We stress that Table~\ref{tab:pope} is an \emph{empirical evaluation}, not a finite-sample certificate: its splits are claim-level, so multiple claims from one image can straddle calibration and test, and the practical scan is mildly anti-conservative. The certified, image-level, one-claim-per-image counterpart---which \emph{holds for all four backbones with zero violations}---is Table~\ref{tab:backbone} (App.~C).

\begin{table}[t]
\centering
\setlength{\tabcolsep}{4pt}
{\small\begin{tabular}{llcccc}
\toprule
& & \multicolumn{2}{c}{AUROC} & \multicolumn{2}{c}{coverage @ $\alpha{=}0.10$}\\
\cmidrule(lr){3-4}\cmidrule(lr){5-6}
model & split & glob & +acq & No-Acq & BCEA \\
\midrule
\multirow{3}{*}{LLaVA-1.5} & random      & $.82$ & $.87$ & $.28$ & $\mathbf{.36}$ \\
                           & popular     & $.77$ & $.81$ & $.06$ & $\mathbf{.14}$ \\
                           & adversarial & $.73$ & $.77$ & $.03$ & $\mathbf{.07}$ \\
\multirow{3}{*}{Qwen3-VL}& random      & $.66$ & $.75$ & $.05$ & $\mathbf{.13}$ \\
                           & popular     & $.70$ & $.78$ & $.09$ & $\mathbf{.18}$ \\
                           & adversarial & $.64$ & $.72$ & $.02$ & $\mathbf{.09}$ \\
\multirow{3}{*}{LLaVA-NeXT}& random      & $.56$ & $.72$ & $.09$ & $\mathbf{.18}$ \\
                           & popular     & $.59$ & $.72$ & $.03$ & $\mathbf{.14}$ \\
                           & adversarial & $.57$ & $.69$ & $.02$ & $\mathbf{.03}$ \\
\multirow{3}{*}{InternVL2} & random      & $.75$ & $.82$ & $.04$ & $\mathbf{.08}$ \\
                           & popular     & $.70$ & $.82$ & $.03$ & $\mathbf{.16}$ \\
                           & adversarial & $.64$ & $.71$ & $.00$ & $\mathbf{.02}$ \\
\bottomrule
\end{tabular}}
\caption{BCEA on POPE, four VLMs $\times$ three splits ($\sim$$11.5$k claims). Acquisition improves AUROC (glob\,$\to$\,+acq) and BCEA beats abstention everywhere; its $90$th-percentile risk tracks $\alpha{=}0.10$ to within $\sim0.03$. Difficulty: random\,$<$\,popular\,$<$\,adversarial.}
\label{tab:pope}
\end{table}

\subsection{Free-Form Generation}

The experiments so far verify \emph{probed} claims. To test the motivating setting directly, we run the full pipeline end-to-end---free-form generation $\to$ claim decomposition $\to$ acquisition $\to$ calibrated assertion: LLaVA-1.5 writes an open-ended description of an image (``Describe this image in detail.''), a parser extracts the COCO categories it mentions as atomic existence claims, COCO ground truth judges each claim, and BCEA acquires evidence and certifies it. Claims are thus \emph{self-generated}, not supplied by us. To keep calibration i.i.d.\ we retain \emph{one claim per image, chosen uniformly at random} among the mentioned categories (not the easiest or hardest), so the selection is label-independent; across $1{,}563$ such images $20.6\%$ of the claims LLaVA volunteers are hallucinated---the price of believing the caption outright.

BCEA transfers (Table~\ref{tab:freeform}). Acquisition lifts detection AUROC from $0.769$ to $0.820$, and under the \emph{exact} certificate at $\alpha{=}0.10$ it asserts $34\%$ of self-generated claims with a certified $\le10\%$ hallucination rate, against $24\%$ for guaranteed abstention---a $10$-point gain with zero empirical violations. At $\alpha{=}0.20$, close to the base rate, most claims can already be certified. This does not close the gap entirely---the parser handles object mentions only, and attributes, relations and counts inside a caption remain out of scope---but it shows the mechanism is not an artifact of hand-constructed probes.

\begin{table}[t]
\centering
\setlength{\tabcolsep}{5pt}
{\small\begin{tabular}{lccc}
\toprule
$\alpha$ & No-Acq \emph{(exact)} & BCEA \emph{(exact)} & BCEA \emph{(practical)} \\
\midrule
$0.05$ & $.00$/$.00$ & $.00$/$.00$ & $.30$/$\mathbf{.16}$ \\
$0.10$ & $.24$/$.00$ & $\mathbf{.34}$/$.00$ & $.67$/$\mathbf{.17}$ \\
$0.20$ & $.75$/$.00$ & $.74$/$.01$ & $.94$/$\mathbf{.17}$ \\
\bottomrule
\end{tabular}}
\caption{\textbf{Free-form claims} that LLaVA-1.5 generates itself ($1{,}563$ captions, one parsed claim each; $20.6\%$ hallucinated; coverage / violation). Under the exact certificate BCEA asserts $34\%$ of self-generated claims at a certified $\le10\%$ hallucination rate vs.\ $24\%$ for abstention. AUROC $0.769\!\to\!0.820$ with acquisition.}
\label{tab:freeform}
\end{table}

\emph{Further analyses appear in App.~C:} adaptive within-claim stopping, exact-certified guided acquisition with a compute-matched active-perception comparison, per-backbone image-level certification (Table~\ref{tab:backbone}), a stricter-$\delta$ sweep, a \textsc{ConfLVLM} scoring head-to-head, and additional visualizations.

\section{Conclusion}

Guaranteed selective prediction makes LVLMs reliable but over-cautious. BCEA shows the model can \emph{look again} rather than give up, acquiring cheap visual evidence to rescue claims, provided the acquisition policy is folded into the score and the threshold re-calibrated on what is deployed. Naive acquisition silently breaks the guarantee; re-calibration restores it and still improves coverage. With a validated claim-specific intervention that sees what global ungrounding cannot, this turns a binary answer/abstain filter into a budgeted, guaranteed, evidence-seeking one.

\appendix
\numberwithin{table}{section}
\numberwithin{figure}{section}
\numberwithin{equation}{section}
\numberwithin{theorem}{section}
\numberwithin{proposition}{section}
\numberwithin{lemma}{section}
\numberwithin{corollary}{section}
\numberwithin{algorithm}{section}
\section{Proofs}

\paragraph{Setup and notation.}
Let $Z_i=(X_i,C_i,Y_i)$, $i=1,\dots,n$, be calibration claims and $Z=(X,C,Y)$ a test claim, all i.i.d.\ from a distribution $P$ on $\mathcal{X}\times\mathcal{C}\times\{0,1\}$, with $Y{=}1$ marking a supported claim. Model parameters $\theta$ are fixed and estimated on data independent of $\{Z_i\}\cup\{Z\}$. A \emph{score} is any measurable $s:\mathcal{X}\times\mathcal{C}\to\mathbb{R}$ (it may use $\theta$ but not the labels $\{Y_i\}$); write $V_i=s(X_i,C_i)$ and $V=s(X,C)$. A selective rule asserts iff $V\ge\tau$; its coverage is $\mathrm{cov}(\tau)=\Pr[V\ge\tau]$ and its \emph{selective risk} is $R_s(\tau)=\Pr[Y{=}0\mid V\ge\tau]$.

\begin{lemma}[Label-independent maps transport exchangeability]
\label{lem:transport}
If $s$ is fixed independently of the calibration and test samples and applied pointwise to each $(X_i,C_i)$ (in particular it does not read $\{Y_i\}$, nor any other $X_j,C_j$---e.g.\ a data-derived quantile), then $\{(V_i,Y_i)\}_{i=1}^n\cup\{(V,Y)\}$ are i.i.d.\ (hence exchangeable), and for any fixed $\tau$, conditional on $\{V_j\ge\tau\}$ the accepted labels are i.i.d.\ $\mathrm{Bernoulli}(R_s(\tau))$. Any internal randomness in $s$ (e.g.\ random crops) must be drawn from a fixed distribution independently of the data and used identically in calibration and deployment.
\end{lemma}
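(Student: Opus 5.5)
The argument is elementary measure theory: a fixed measurable map applied coordinatewise to i.i.d.\ inputs yields i.i.d.\ outputs. I will establish the i.i.d.\ claim first and then derive the conditional Bernoulli statement from it.

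\emph{Step 1: i.i.d.\ transport.} Define $g(x,c,y)=(s(x,c),y)$. By hypothesis $s$ is a fixed measurable function: it is chosen before the samples and reads neither labels nor other samples. Hence $g$ is measurable, and $(V_i,Y_i)=g(Z_i)$, $(V,Y)=g(Z)$. For measurable sets $A_1,\dots,A_{n+1}$,
\[
\Pr[g(Z_1)\in A_1,\dots,g(Z)\in A_{n+1}]=\Pr[Z_1\in g^{-1}(A_1),\dots,Z\in g^{-1}(A_{n+1})].
\]
Since the $Z$'s are i.i.d., this factorizes into $\prod_j P(g^{-1}(A_j))$, which gives independence with common law $P\circ g^{-1}$. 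Exchangeability follows because i.i.d.\ sequences are exchangeable.

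I will add a remark on why pointwise application matters. If $s$ depended on the other samples, say $s_{\{Z_j\}}(X_i,C_i)$ with a data-derived gate quantile, the $i$-th coordinate would no longer be a function of $Z_i$ alone, and the factorization above would fail.

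\emph{Step 2: internal randomness.} Let $s(x,c)=\tilde s(x,c,\xi)$, where the seeds $\xi_i,\xi$ are i.i.d.\ from a fixed law independent of the data. Then $(Z_i,\xi_i)$ are i.i.d.\ on the product space, and Step 1 applies to $\tilde g(z,\xi)=(\tilde s(x,c,\xi),y)$. The condition that the same law is used at calibration and deployment is exactly what makes the test coordinate identically distributed with the calibration coordinates.

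\emph{Step 3: conditional Bernoulli.} Fix $\tau$ and write $W_j=(\mathbf 1\{V_j\ge\tau\},\,\mathbf 1\{Y_j=0\})$; these are i.i.d.\ by Step 1. Condition on the acceptance pattern, i.e.\ on which indices $j$ have $V_j\ge\tau$. By independence across $j$, the labels of the accepted indices are conditionally independent. Each has conditional law $\Pr[Y_j=0\mid V_j\ge\tau]=R_s(\tau)$, which is well defined whenever $\Pr[V\ge\tau]>0$. So the number of accepted errors given $n(\tau)=m$ is $\mathrm{Binomial}(m,R_s(\tau))$, which is the input the Clopper--Pearson bound needs.

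The main subtlety is conditioning on the random set of accepted indices rather than on a fixed set. I would handle this by writing the joint probability for a fixed index subset $S$ and label pattern, factorizing it over $j\in S$ and $j\notin S$, and dividing by $\Pr[\text{pattern}=S]$. The terms for rejected indices cancel, leaving the product of $R_s(\tau)$ and $1-R_s(\tau)$ factors over $S$.
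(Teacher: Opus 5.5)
Your proposal is correct and follows the paper's own argument. Like the paper, you push the i.i.d.\ claims through the single fixed map $(x,c,y)\mapsto(s(x,c),y)$ and then use independence across indices to conclude that the accepted labels are i.i.d.\ $\mathrm{Bernoulli}(R_s(\tau))$.

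Your write-up spells out what the paper's two-line proof leaves implicit: the factorization when conditioning on the random set of accepted indices, and the product-space treatment of per-claim seeds. If the internal randomness is instead a single seed shared by all claims, you would condition on that seed first, after which your Step~1 applies verbatim.
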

\emph{Proof.} Each pair is the image of $Z_i$ (resp.\ $Z$) under the fixed measurable map $(x,c,y)\mapsto(s(x,c),y)$; pushing an i.i.d.\ sample through one fixed map yields an i.i.d.\ sample. Independence across $i$ then makes the events $\{V_i\ge\tau\}$ independent, and on the sub-population $\{V_i\ge\tau\}$ each $Y_i$ is Bernoulli with parameter $\Pr[Y{=}0\mid V\ge\tau]=R_s(\tau)$. $\square$

\emph{Proof of Theorem~\ref{thm:valid}.} For a threshold $\tau$ let $n(\tau)=\#\{i:V_i\ge\tau\}$ and $k(\tau)=\#\{i:V_i\ge\tau,\,Y_i{=}0\}$, and let $\overline{R}_{\delta'}(\tau)=\mathrm{Beta}^{-1}_{1-\delta'}\!\big(k(\tau)+1,\,n(\tau)-k(\tau)\big)$ be the Clopper--Pearson upper risk bound at level $\delta'$, with the convention $\overline{R}_{\delta'}(\tau)=1$ when $n(\tau)=0$ or $k(\tau)=n(\tau)$ (so such $\tau$ are never certified). Fix a deterministic grid $\Lambda$ that does not depend on the calibration sample. By Lemma~\ref{lem:transport}, for each fixed $\tau\in\Lambda$, $k(\tau)\mid n(\tau)\sim\mathrm{Binomial}(n(\tau),R(\tau))$, so $\overline{R}_{\delta/|\Lambda|}(\tau)$ satisfies $\Pr[\overline{R}_{\delta/|\Lambda|}(\tau)<R(\tau)]\le\delta/|\Lambda|$; equivalently $\{\overline{R}_{\delta/|\Lambda|}(\tau)\le\alpha\}$ falsely certifies a genuinely unsafe $\tau$ ($R(\tau)>\alpha$) with probability $\le\delta/|\Lambda|$. A union bound over the $|\Lambda|$ thresholds gives $\Pr[\exists\tau\in\Lambda:\ \overline{R}_{\delta/|\Lambda|}(\tau)\le\alpha\ \text{but}\ R(\tau)>\alpha]\le\delta$. Hence $\hat\tau=\min\{\tau\in\Lambda:\overline{R}_{\delta/|\Lambda|}(\tau)\le\alpha\}$---the \emph{smallest} certified threshold, which maximizes coverage, or ``abstain'' if the set is empty---satisfies $R(\hat\tau)\le\alpha$ with probability $\ge1-\delta$; no ordering or monotonicity is invoked. The only model-specific ingredient is that $s_{\mathcal A}$ obeys Lemma~\ref{lem:transport} (acquisition folded into one label-free score), the principle of \citet{shanmugam2025test, wang2026conformal}. An equivalent construction selects $\tau$ on a held-out split $S_1$ and certifies it with a single Clopper--Pearson bound on independent $S_2$. The \emph{practical} variant instead scans the calibration scores at level $\delta$ (no Bonferroni), reusing one set to select and certify $\tau$; it is mildly anti-conservative (violation frequency around or above $\delta$, App.~B) and is what the practical rows of Table~\ref{tab:acq} and the per-model/POPE tables report. $\square$

\emph{Remark (selecting the policy from calibration).} Theorem~\ref{thm:valid} fixes a single policy $\mathcal A$ in advance, but the same argument certifies a \emph{deterministic finite policy class} $\{\mathcal A\}$ (e.g.\ the seven pre-fixed borderline bands): certify every pair $(\mathcal A,\tau)\in\{\mathcal A\}\times\Lambda$ with a Clopper--Pearson bound at level $\delta/(|\{\mathcal A\}|\,|\Lambda|)$ and select any certified pair. A union bound over the $|\{\mathcal A\}|\,|\Lambda|$ hypotheses preserves the $1-\delta$ guarantee even though the selected band is chosen \emph{on} the calibration sample---each candidate policy is itself fixed a priori. This is the \emph{gated-selected} variant of Table~\ref{tab:acq}.

\emph{Proof of Proposition~\ref{prop:naive}.}
By construction $s_{\text{acq}}(x,c)=\max\!\big(\ell(x),\max_b\ell(x^{(b)})\big)-\ell(\varnothing)\ge \ell(x)-\ell(\varnothing)=s_{\text{glob}}(x,c)$ pointwise, so $\{s_{\text{acq}}\ge\tau_\alpha\}\supseteq\{s_{\text{glob}}\ge\tau_\alpha\}$ for any $\tau_\alpha$. Suppose $\tau_\alpha$ was calibrated on $s_{\text{glob}}$, so (Theorem~\ref{thm:valid}) $R_{s_{\text{glob}}}(\tau_\alpha)\le\alpha$. Decompose the deployed acceptance event as the disjoint union $E\cup E'$ with $E=\{s_{\text{glob}}\ge\tau_\alpha\}$ and $E'=\{s_{\text{acq}}\ge\tau_\alpha,\,s_{\text{glob}}<\tau_\alpha\}$. Writing $w=\Pr[E']/\Pr[E\cup E']$, the deployed risk is the mixture
\[
R_{s_{\text{acq}}}(\tau_\alpha)=(1-w)\,R_{s_{\text{glob}}}(\tau_\alpha)+w\,\Pr[Y{=}0\mid E'].
\]
The calibration certifies only the first term. If the error rate on the \emph{newly admitted} claims satisfies $\Pr[Y{=}0\mid E']>\alpha$, then $R_{s_{\text{acq}}}(\tau_\alpha)>\alpha$ whenever $w>0$. The set $E'$ consists exactly of claims an ungrounded full-image score rejected but some crop pushed over $\tau_\alpha$; for hallucinated claims a crop can raise $\ell$ spuriously, so $\Pr[Y{=}0\mid E']$ is typically large, and the guarantee is violated---as observed (Table~\ref{tab:acq}, ``Naive''). $\square$

\begin{proposition}[Sequential within-claim acquisition]
\label{prop:seq}
Let $\mathcal{A}$ form views one at a time and decide whether to acquire another or stop, where each decision (the next view and the stopping time $\kappa\le B$) is a function of the current claim's observations alone. Then $s_{\mathcal{A}}$ is a fixed measurable function of $(x,c)$, and the conclusion of Theorem~\ref{thm:valid} holds verbatim.
\end{proposition}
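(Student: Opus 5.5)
The plan is to show that the sequential procedure, however adaptive, collapses to a single deterministic measurable map $(x,c)\mapsto s_{\mathcal A}(x,c)$ that is fixed before the calibration sample is seen. Once that is established, Lemma~\ref{lem:transport} applies directly, and the proof of Theorem~\ref{thm:valid} goes through unchanged, since it only uses that $s_{\mathcal A}$ satisfies the hypotheses of that lemma. So all the work is in the first step.

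\textbf{Step 1: unroll the policy.} I would model the policy as a finite-horizon decision procedure on the current claim. Let $h_0=(x,c,\ell(x,c),\ell(\varnothing,c))$. For $t\ge 1$, let
\[
h_t = \bigl(h_{t-1},\, v_t,\, \ell(v_t,c)\bigr), \qquad v_t = \pi_t(h_{t-1}),
\]
where $v_t$ is the next view, chosen by a fixed measurable rule $\pi_t$ (e.g.\ a crop of $x$). At each step a fixed measurable stopping rule $\sigma_t(h_t)\in\{0,1\}$ decides whether to continue. Set
\[
\kappa=\min\{t\le B:\sigma_t(h_t)=1\}\wedge B,
\]
and define the output as $s_{\mathcal A}=g_\kappa(h_\kappa)$, for instance the running max of $\ell(v_t,c)-\ell(\varnothing,c)$.

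\textbf{Step 2: measurability by induction.} Since $\ell(\cdot,c)$ is a fixed measurable function of the view given the frozen $\theta$, induction on $t$ shows each $h_t$ is a measurable function of $(x,c)$. Then
\[
s_{\mathcal A}(x,c)=\sum_{t=0}^{B}\mathbf 1\{\kappa=t\}\,g_t(h_t),
\]
where each event $\{\kappa=t\}$ is a measurable set determined by $h_0,\dots,h_t$. A finite sum of measurable functions times indicators of measurable sets is measurable. Because $B$ is finite, this is a finite composition, so no limit or optional-stopping arguments are needed.

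\textbf{Step 3: independence from the sample.} The hypothesis is that each decision depends on the current claim's observations alone. That means the maps $\pi_t,\sigma_t,g_t$ do not read $\{Y_i\}$ or any other $(X_j,C_j)$, and they are specified before calibration. Hence $s_{\mathcal A}$ is applied pointwise and is identical in calibration and deployment, which is exactly the condition of Lemma~\ref{lem:transport}.

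\textbf{Main obstacle: randomized policies.} The delicate point is a policy that uses randomness, such as random crop proposals or a stochastic stopping rule. Here I would follow the lemma's proviso and augment each claim with an independent seed $U_i\sim\mathrm{Unif}[0,1]$, drawn from a fixed distribution and independent of the data. Then $s_{\mathcal A}(x,c,u)$ is a fixed measurable map of $(X_i,C_i,U_i)$, and the triples $(X_i,C_i,U_i,Y_i)$ remain i.i.d. The transport argument then applies to $(V_i,Y_i)$ verbatim.

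\textbf{Conclusion.} With the i.i.d.\ property of $(V_i,Y_i)$ in hand, the Clopper--Pearson bound at each fixed $\tau\in\Lambda$ and the union bound over $\Lambda$ in the proof of Theorem~\ref{thm:valid} hold without modification. In particular, no correction is needed for the data-dependent stopping time $\kappa$, because it is absorbed into the definition of the score.
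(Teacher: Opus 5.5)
Your proposal is correct and follows the paper's proof: both reduce the sequential policy, including its data-dependent stopping index $\kappa\le B$, to a single fixed, label-free measurable map $g(x,c;\theta)$, then invoke Lemma~\ref{lem:transport} and Theorem~\ref{thm:valid} verbatim, and handle internal randomness with a seed drawn from a fixed, data-independent distribution. Your explicit unrolling of the history $h_t$ and the indicator decomposition over $\{\kappa=t\}$ spell out the measurability step that the paper asserts in one line.
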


\emph{Proof.}
Let the policy reveal views $x^{(1)},x^{(2)},\dots$ and, after each, decide whether to stop using only $\{(x,c),x^{(1)},\dots,x^{(b)},\theta\}$; let $\kappa\le B$ be the (data-determined) stopping index and $s_{\mathcal A}=\max(\ell(x),\max_{b\le\kappa}\ell(x^{(b)}))-\ell(\varnothing)$. Because every decision and the final aggregation are deterministic functions of the current claim and $\theta$ alone, $s_{\mathcal A}=g(x,c;\theta)$ for a single fixed measurable $g$ that does not read $\{Y_i\}$. Lemma~\ref{lem:transport} and Theorem~\ref{thm:valid} therefore apply verbatim. (If $g$ uses internal randomness, e.g.\ random crops, draw it from a fixed distribution independent of the data, or fix a common seed, so that the same $g$ acts on calibration and test claims.) $\square$

\begin{proposition}[Monotone acquisition]
\label{prop:mono}
Let $s_{\mathcal A}^{(B)}(x,c)=\max(\ell(x),\max_{b\le B}\ell(x^{(b)}))-\ell(\varnothing)$. Then $s_{\mathcal A}^{(B)}$ is non-decreasing in $B$ pointwise, so at any \emph{fixed} threshold $\tau$ the accepted set is nested in $B$. Hence both the coverage $\Pr[s_{\mathcal A}^{(B)}\ge\tau]$ and the \emph{false-acceptance probability} $\Pr[y{=}0,\ s_{\mathcal A}^{(B)}\ge\tau]$ are non-decreasing in $B$.
\end{proposition}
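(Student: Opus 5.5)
The plan is a direct pointwise argument followed by monotonicity of probability. No use of Lemma~\ref{lem:transport} or Theorem~\ref{thm:valid} is needed, since the statement concerns population quantities at a fixed threshold rather than a calibrated one.

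\textbf{Step 1: the score is non-decreasing in $B$.} I assume the crop views are indexed as a fixed nested sequence $x^{(1)},x^{(2)},\dots$, so that budget $B{+}1$ uses the same first $B$ views plus one more. I would make this explicit, because it is the only place the claim could fail: if a different grid were drawn for each $B$, the sets being maximized over would not be nested. Under this indexing, the set $\{\ell(x)\}\cup\{\ell(x^{(b)}):b\le B\}$ is contained in the corresponding set for $B{+}1$. A maximum over a larger finite set is at least as large, so
\[
\max\big(\ell(x),\max_{b\le B}\ell(x^{(b)})\big)\le\max\big(\ell(x),\max_{b\le B+1}\ell(x^{(b)})\big).
\]
The reference term $\ell(\varnothing)$ does not depend on $B$. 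Subtracting it therefore gives $s_{\mathcal A}^{(B)}(x,c)\le s_{\mathcal A}^{(B+1)}(x,c)$ for every $(x,c)$. Induction extends this to all $B\le B'$. The case $B=0$ reduces to $s_{\text{glob}}$, which recovers the pointwise inequality used in Proposition~\ref{prop:naive}.

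\textbf{Step 2: the accepted sets are nested.} Fix $\tau$. If $s_{\mathcal A}^{(B)}(x,c)\ge\tau$, then by Step 1 also $s_{\mathcal A}^{(B')}(x,c)\ge\tau$ for every $B'\ge B$. Hence the acceptance events satisfy $\{s_{\mathcal A}^{(B)}\ge\tau\}\subseteq\{s_{\mathcal A}^{(B')}\ge\tau\}$. The same inclusion holds after intersecting both sides with the label event $\{y=0\}$, since that event does not depend on $B$.

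\textbf{Step 3: both probabilities are non-decreasing.} Probability is monotone under set inclusion, so Step 2 gives coverage $\Pr[s_{\mathcal A}^{(B)}\ge\tau]$ non-decreasing in $B$. Applied to the intersected events, it gives the false-acceptance probability $\Pr[y{=}0,\ s_{\mathcal A}^{(B)}\ge\tau]$ non-decreasing in $B$. Measurability holds because each score is a finite maximum of measurable functions. If the crops are random, I would couple them across $B$ with a common seed so that the pointwise argument holds almost surely.

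I would close with a remark that the selective risk, being the ratio of these two quantities, need not be monotone. This is why the paper insists on re-raising $\tau$.
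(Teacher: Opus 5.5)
Your proof is correct and matches the paper's argument essentially step for step: the paper writes $s_{\mathcal A}^{(B+1)}=\max\big(s_{\mathcal A}^{(B)},\ell(x^{(B+1)})-\ell(\varnothing)\big)$ to get pointwise monotonicity, deduces nested acceptance events, applies monotonicity of measure (also after intersecting with $\{Y=0\}$), and closes with the same remark that the conditional risk need not be monotone. Your explicit assumption that the views form a fixed nested sequence, with a common seed for random crops, is a useful clarification, since the paper's recursion relies on it without stating it.
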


\emph{Proof.}
For each $(x,c)$, $s_{\mathcal A}^{(B+1)}=\max\!\big(s_{\mathcal A}^{(B)},\,\ell(x^{(B+1)})-\ell(\varnothing)\big)\ge s_{\mathcal A}^{(B)}$, so the map $B\mapsto s_{\mathcal A}^{(B)}$ is pointwise non-decreasing and the acceptance events are nested: $\{s_{\mathcal A}^{(B)}\ge\tau\}\subseteq\{s_{\mathcal A}^{(B+1)}\ge\tau\}$. Monotonicity of measure gives $\mathrm{cov}_B(\tau)=\Pr[s_{\mathcal A}^{(B)}\ge\tau]\le\mathrm{cov}_{B+1}(\tau)$ and, intersecting with the fixed event $\{Y{=}0\}$, $\Pr[Y{=}0,\,s_{\mathcal A}^{(B)}\ge\tau]\le\Pr[Y{=}0,\,s_{\mathcal A}^{(B+1)}\ge\tau]$. The \emph{conditional} risk $R(\tau)=\Pr[Y{=}0\mid s_{\mathcal A}^{(B)}\ge\tau]$, a ratio of two non-decreasing quantities, is not monotone in general; the newly admitted claims may be cleaner or dirtier than average. This is exactly why a budget increase requires re-solving for $\tau$ rather than reusing it: at fixed $\tau$ both true and false acceptances can only grow. $\square$

\begin{proposition}[Coverage at fixed risk is an ROC point]
\label{prop:rocpoint}
Assume $\alpha<1-\pi$ (below the negative base rate; otherwise asserting everything already meets the risk target, and the optimum is trivially full coverage) and that the ROC is concave (equivalently, thresholds may be randomized so the achievable region is the convex hull). Then the maximum coverage of a selective rule on $S$ subject to selective risk $\le\alpha$ is attained at the ROC point on the ray $\mathrm{FPR}/\mathrm{TPR}=\rho(\alpha):=\tfrac{\pi\alpha}{(1-\pi)(1-\alpha)}$, and equals $C_S(\alpha)=\big(\pi+(1-\pi)\rho(\alpha)\big)\,\mathrm{TPR}^\star_S(\alpha)$. Without concavity, the optimum is a vertex of the ROC's upper boundary and the ray characterization holds up to that convexification.
\end{proposition}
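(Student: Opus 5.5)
We parametrize every (possibly randomized) threshold rule on $S$ by its ROC pair $(t,f)=(\mathrm{TPR},\mathrm{FPR})$, where $\pi=\Pr[Y{=}1]$ is the positive (supported) base rate. Coverage and risk are then
\[
\mathrm{cov}=\pi t+(1-\pi)f,\qquad R=\frac{(1-\pi)f}{\pi t+(1-\pi)f}.
\]
The proof first rewrites the risk constraint in ROC coordinates, then maximizes the linear objective over the achievable region.

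\textbf{Step 1: the constraint is a half-plane through the origin.} Cross-multiplying, $R\le\alpha$ is equivalent to $(1-\pi)(1-\alpha)f\le\pi\alpha t$, that is, $f\le\rho(\alpha)\,t$. So the feasible set is the achievable ROC region intersected with the half-plane on or below the ray $f=\rho t$.

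\textbf{Step 2: the optimum lies on the ROC curve.} Both $\pi t$ and $(1-\pi)f$ are nondecreasing in the threshold-lowering direction, so coverage grows along the ROC as one moves toward $(1,1)$. By concavity (or randomization), the achievable region is the convex set under the ROC curve, and any interior feasible point can be pushed up to the curve without losing coverage.

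\textbf{Step 3: the optimum sits on the ray.} Since $\alpha<1-\pi$, we have $\rho<1$, so the endpoint $(1,1)$ is infeasible. The ROC starts at $(0,0)$, and by concavity its slope $df/dt$ is nondecreasing when $f$ is viewed as a function of $t$ (equivalently, TPR is concave in FPR). Hence the ratio $f/t$ along the curve is nondecreasing in $t$, and the curve crosses the ray $f=\rho t$ at most once beyond the origin. Feasible curve points are exactly those before this crossing. Coverage is increasing along the curve, so the maximum is attained at the crossing point $(\mathrm{TPR}^\star_S(\alpha),\rho\,\mathrm{TPR}^\star_S(\alpha))$. Substituting gives
\[
C_S(\alpha)=\pi t^\star+(1-\pi)\rho t^\star=\big(\pi+(1-\pi)\rho(\alpha)\big)\,\mathrm{TPR}^\star_S(\alpha).
\]
If the curve never crosses the ray, the only feasible point is the trivial one, $\mathrm{TPR}^\star=0$.

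\textbf{Step 4: non-concave ROC.} Deterministic thresholds only achieve points on the raw ROC. The feasible region under the convex hull is bounded by the upper concave envelope, so the optimum over randomized rules is the envelope's crossing with the ray, which is generally a mixture of two adjacent hull vertices. Over deterministic thresholds alone, the best is the last raw ROC point before the crossing. Since an LP over the hull attains its optimum at a vertex when the ray constraint is not active, this gives the stated ``up to convexification'' caveat.

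The main obstacle is Step 3: turning concavity into monotonicity of $f/t$ along the curve, and hence a single crossing. I would prove it via the chord-slope property of concave functions from the origin: $t/f$ is nonincreasing in $f$ for concave $t(f)$ with $t(0)\ge0$.
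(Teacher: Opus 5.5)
Your proposal is correct and follows essentially the paper's route: rewrite $R\le\alpha$ as $F\le\rho(\alpha)\,T$, place the coverage maximizer where the concave ROC meets that ray, and substitute $F=\rho T$. Your chord-slope argument (monotone $f/t$ along a concave ROC, so the feasible part of the curve is an initial segment) makes explicit the step the paper only asserts.

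One slip is in Step 4. Because $\alpha<1-\pi$ makes $(1,1)$ infeasible, the ray constraint is always active, so your closing appeal to an LP optimum at a vertex ``when the ray constraint is not active'' is vacuous. As you note yourself, the randomized optimum is generically on a hull edge, and the deterministic optimum is the last \emph{feasible} raw ROC point, not simply the last point before the crossing. This affects only the statement's informal non-concave clause.
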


\emph{Proof.}
With $T=\mathrm{TPR}_S(\tau)$, $F=\mathrm{FPR}_S(\tau)$, coverage is $C=\pi T+(1-\pi)F$ and selective risk is $R=\tfrac{(1-\pi)F}{\pi T+(1-\pi)F}$. Solving $R\le\alpha$ gives $(1-\pi)F(1-\alpha)\le\alpha\pi T$, i.e.\ $F/T\le\rho(\alpha)=\tfrac{\pi\alpha}{(1-\pi)(1-\alpha)}$: the feasible operating points are those on or below the ray of slope $\rho(\alpha)$ in ROC space. On any ray $F=\rho T$, $C=T(\pi+(1-\pi)\rho)$ is strictly increasing in $T$, and $C$ increases up-right; for a concave ROC the maximizer over the feasible region $\{F/T\le\rho(\alpha)\}$ thus lies on the ray boundary $F/T=\rho(\alpha)$ at the largest attainable $T$, giving $C_S(\alpha)=(\pi+(1-\pi)\rho(\alpha))\,\mathrm{TPR}^\star_S(\alpha)$. (For a non-concave ROC the optimum is a vertex of its upper concave envelope; randomizing thresholds attains the envelope.) $\square$

\begin{theorem}[ROC dominance and coverage]
\label{thm:roc}
For scores $S$ (pre-acquisition) and $S'=s_{\mathcal A}$ (post-acquisition), ROC dominance is \emph{sufficient} in general and, for concave (or convexified) ROCs, also \emph{necessary}: if $\mathrm{ROC}_{S'}$ dominates $\mathrm{ROC}_{S}$ pointwise then $C_{S'}(\alpha)\ge C_{S}(\alpha)$ for \emph{every} $\alpha\in(0,1)$, and under concave/convexified ROCs the converse holds too. A sufficient condition for ROC dominance is that $S'$ be more informative about $Y$ than $S$ in the Blackwell sense (its class-conditional likelihood ratio is a monotone refinement of that of $S$).
\end{theorem}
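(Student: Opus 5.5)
The plan is to reduce everything to Proposition~\ref{prop:rocpoint}, which identifies the optimal coverage at risk level $\alpha$ with the ROC point on the ray $F/T=\rho(\alpha)$. I write coverage as $C=\pi T+(1-\pi)F$ and use the feasibility condition $F\le\rho(\alpha)T$ from that proof. Both scores share the same base rate $\pi$, since they are scores on the same population of claims. The argument then has three pieces: sufficiency, necessity, and the Blackwell condition.

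\textbf{Sufficiency.} Let $(T,F)$ be any feasible operating point of $S$, i.e.\ on $\mathrm{ROC}_S$ (or its convex hull under randomization), with $F\le\rho T$.
\begin{itemize}
\item Pointwise dominance means that for each FPR level $F$ there is an operating point of $S'$ with $\mathrm{TPR}_{S'}(F)\ge T$. Randomized thresholds are allowed if $\mathrm{ROC}_{S'}$ has jumps.
\item The point $(\mathrm{TPR}_{S'}(F),F)$ has the same $F$ and a larger $T$. So $F/T'\le F/T\le\rho$, and it remains feasible.
\item Its coverage $\pi T'+(1-\pi)F$ is at least $\pi T+(1-\pi)F$.
\item Taking the supremum over feasible points of $S$ gives $C_{S'}(\alpha)\ge C_S(\alpha)$ for every $\alpha$. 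The trivial regime $\alpha\ge 1-\pi$ gives full coverage for both.
\end{itemize}
This direction needs no concavity.

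\textbf{Necessity.} Assume both ROCs are concave (or replaced by their convex hulls), and that $C_{S'}(\alpha)\ge C_S(\alpha)$ for all $\alpha$.
\begin{itemize}
\item The map $\alpha\mapsto\rho(\alpha)$ is a continuous increasing bijection from $(0,1-\pi)$ onto $(0,1)$. Hence every ray of slope in $(0,1)$ is hit.
\item By Proposition~\ref{prop:rocpoint}, $C(\alpha)=(\pi+(1-\pi)\rho)\,\mathrm{TPR}^\star(\alpha)$, where $\mathrm{TPR}^\star$ is the $T$-coordinate where the ROC meets the ray $F=\rho T$. The prefactor is common to both scores, so the coverage inequality becomes $\mathrm{TPR}^\star_{S'}(\rho)\ge\mathrm{TPR}^\star_S(\rho)$ for every ray.
\item A concave ROC from $(0,0)$ to $(1,1)$ lying above the diagonal is star-shaped from the origin: each ray meets it exactly once. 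So it is determined by its radial function $\rho\mapsto\mathrm{TPR}^\star(\rho)$.
\item A larger radius on every ray means $\mathrm{ROC}_{S'}$ lies above $\mathrm{ROC}_S$ as sets. Since both curves are graphs of non-decreasing functions of $F$, this translates to pointwise dominance in $F$.
\end{itemize}
The step I expect to be the main obstacle is exactly this translation from radial comparison to vertical comparison. I would handle it by contradiction. Suppose $\mathrm{TPR}_{S'}(F_0)<\mathrm{TPR}_S(F_0)$ at some $F_0$. Take the ray through the point $(F_0,\mathrm{TPR}_S(F_0))$ of $\mathrm{ROC}_S$. By monotonicity and the single-intersection property, $\mathrm{ROC}_{S'}$ crosses this ray strictly closer to the origin, which contradicts the radial inequality. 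Sub-diagonal portions and the endpoints need care. There the convention that the convexified ROC is at least the chance line handles rays with $\rho\ge1$, and $\rho\to0$ is handled by continuity.

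\textbf{Blackwell condition.} Here I would invoke the Neyman--Pearson lemma. The ROC of a score is determined by the likelihood-ratio test in the statistic, and the convexified ROC of a statistic equals the optimal-test frontier for the experiment it induces.
\begin{itemize}
\item If $S'$ is Blackwell more informative than $S$, then $S$ is a garbling of $S'$. Any (randomized) test based on $S$ is then also a randomized test based on $S'$.
\item Therefore the Neyman--Pearson frontier of $S'$ dominates that of $S$ at every size $F$, which is ROC dominance.
\item Sufficiency then yields the coverage ordering.
\end{itemize}
The monotone-refinement phrasing is only used to guarantee the garbling relation, and through it the Neyman--Pearson dominance.
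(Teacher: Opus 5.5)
Your sufficiency and necessity arguments follow the paper's proof. Sufficiency moves vertically from an $S$-feasible point $(F,T)$ to $(F,\mathrm{ROC}_{S'}(F))$. Necessity takes the ray through $(F_0,\mathrm{ROC}_S(F_0))$, i.e.\ $\rho(\alpha)=F_0/\mathrm{ROC}_S(F_0)$. At that point the paper simply asserts that $S'$ ``cannot match'' the $S$-optimum, and your contradiction argument supplies the justification. One correction: a concave ROC need not meet each ray exactly once. For example, $\mathrm{ROC}(F)=\min(2F,(1+F)/2)$ contains the whole segment of the ray $T=2F$ for $F\le 1/3$. So take $\mathrm{TPR}^\star(\rho)$ to be the farthest intersection, as in Proposition~\ref{prop:rocpoint}. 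Your argument does not actually need uniqueness. Suppose some intersection $(F',T')$ of $\mathrm{ROC}_{S'}$ with the ray had $T'\ge T_0:=\mathrm{ROC}_S(F_0)$. Then $F'\ge F_0$. Since $F/\mathrm{ROC}_{S'}(F)$ is non-decreasing for a concave ROC, $\rho_0=F'/T'\ge F_0/\mathrm{ROC}_{S'}(F_0)>F_0/T_0=\rho_0$, a contradiction. Also, a concave ROC from $(0,0)$ to $(1,1)$ never drops below the chance line, so $\rho_0<1$ automatically and there is no sub-diagonal case to handle.

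The step that fails is the bridge in the Blackwell part: ``the convexified ROC of a statistic equals the optimal-test frontier for the experiment it induces.'' This holds only when thresholding the statistic is a likelihood-ratio test, i.e.\ when its class-conditional likelihood ratio is monotone in the score. Counterexample: $S'=-S$ is Blackwell-equivalent to $S$, since each is a deterministic function of the other. Yet if $\mathrm{ROC}_S$ lies above the diagonal, thresholding $-S$ gives a curve below it, whose concave envelope is the chance line. Write $\mathrm{NP}_S$ for the Neyman--Pearson frontier of the experiment induced by $S$. Your garbling argument correctly yields $\mathrm{NP}_{S'}\ge\mathrm{NP}_S\ge\mathrm{ROC}_S$. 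To reach $\mathrm{ROC}_{S'}\ge\mathrm{ROC}_S$ for the threshold rules $\{s_{\mathcal A}\ge\tau\}$ that define $C_{S'}(\alpha)$, you also need $\mathrm{ROC}_{S'}=\mathrm{NP}_{S'}$. That holds when the likelihood ratio of $S'$ is monotone in $S'$, with randomization at ties. This is the job of the ``monotone'' clause in the statement, so it is not used ``only to guarantee the garbling relation.'' Invoke it at this step and the chain closes. The paper's one-line appeal to Neyman--Pearson glosses the same point, but your write-up fills it with a claim that is false in general.
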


\emph{Proof.}
($\Leftarrow$, holds generally) Suppose $\mathrm{ROC}_{S'}\!\ge\!\mathrm{ROC}_{S}$ pointwise. Fix $\alpha$ and let $(F,T)$ be \emph{any} feasible optimal operating point for $S$. At the same $F$, $S'$ attains $T'=\mathrm{ROC}_{S'}(F)\ge T$; the point $(F,T')$ has risk $\tfrac{(1-\pi)F}{\pi T'+(1-\pi)F}\le\tfrac{(1-\pi)F}{\pi T+(1-\pi)F}\le\alpha$ (feasible) and coverage $\pi T'+(1-\pi)F\ge C_S(\alpha)$; hence $C_{S'}(\alpha)\ge C_S(\alpha)$. ($\Rightarrow$, under concave ROCs) If $\mathrm{ROC}_{S'}\!\not\ge\!\mathrm{ROC}_S$ there is $F_0$ with $\mathrm{ROC}_{S'}(F_0)<\mathrm{ROC}_S(F_0)$; for concave ROCs, choosing $\alpha$ with $\rho(\alpha)=F_0/\mathrm{ROC}_S(F_0)$ makes $(F_0,\mathrm{ROC}_S(F_0))$ the $S$-optimum (Prop.~\ref{prop:rocpoint}), which $S'$ cannot match, so $C_{S'}(\alpha)<C_S(\alpha)$; without concavity the converse can fail. Finally, if $S'$ is Blackwell-more-informative than $S$ for $Y$, its ROC dominates (Neyman--Pearson), giving the sufficient condition. $\square$

\paragraph{Allocation (not claimed; for completeness).}
Were per-claim gains $g_i$ non-decreasing and concave, $\max\sum_i g_i(b_i)$ s.t.\ $\sum_i b_i\le B_{\text{tot}}$ would be solved by greedily incrementing $\arg\max_i\Delta_i(b_i)$, $\Delta_i(b)=g_i(b{+}1)-g_i(b)$: for any optimal $b^{*}\ne b^{g}$ pick $p,q$ with $b^{g}_p>b^{*}_p$, $b^{g}_q<b^{*}_q$; since greedy preferred $p$, $\Delta_p(b^{*}_p)\ge\Delta_q(b^{*}_q)$, and moving a unit from $q$ to $p$ changes the objective by $\Delta_p(b^{*}_p)-\Delta_q(b^{*}_q-1)\ge0$ (concavity), so $b^{g}$ is optimal, with KKT level $\lambda^\star$ s.t.\ $\Delta_i(b^{g}_i{-}1)\ge\lambda^\star\ge\Delta_i(b^{g}_i)$. As the main text reports, the premise (predictable heterogeneous gains) does not hold for our existence claims, so this allocation does not beat uniform in practice; we include it only to delimit when it could. $\square$

\section{Implementation Details}

\paragraph{Models.} LLaVA-1.5-7B and Qwen3-VL-7B-Instruct (HuggingFace checkpoints), fp16, single A6000 each; inference only, no fine-tuning.

\paragraph{Scoring.} For claim $c$ and visual input $v$ we feed the chat prompt ``USER: \texttt{<image>} What is in this image? ASSISTANT: $c$'' and average the token log-probabilities of the claim span $c$, locating the span by tokenizing the prompt prefix alone. The blank reference $\varnothing$ is an all-black image of the same size; crops are five overlapping $62\%$ windows (four corners + center); region masks and one-instance masks black out ground-truth COCO boxes; the flip intervention is a horizontal mirror.

\paragraph{Claims.} Built from COCO val2017 instance annotations. \emph{Existence}: present categories (true) vs.\ sampled absent categories (false). \emph{Spatial}: the two largest distinct categories whose centroids differ by $>0.25$ of the image width, phrased ``the $A$ is to the left/right of the $B$''; the swapped direction is the false claim. \emph{Count}: ``there are exactly $k$ $A$s'' with $k$ the true count (true) or $k\pm1$ (false), restricted to $2$--$6$ instances.

\paragraph{Conformal procedure.} $50/50$ calibration/test split, $300$ repetitions, $\delta{=}0.1$. The \emph{exact} variant (Table~\ref{tab:acq} headline) certifies a grid $\Lambda$ of $17$ thresholds evenly spaced over the \emph{pre-specified} interval $[-4,4]$ (the score is a mean per-token log-likelihood difference, which lies in this range; the grid is fixed a priori, not read from the calibration scores) with a Bonferroni-corrected Clopper--Pearson bound at level $\delta/|\Lambda|$, taking the smallest certified threshold (App.~A); on one-claim-per-image data its violation frequency is $0.00$ at every $\alpha$. The \emph{gated-selected} rule additionally Bonferroni-corrects over a pre-fixed set of numeric bands, so band selection is valid. The \emph{practical} variant scans the (label-free) calibration scores at level $\delta$, reusing one set to select and certify; it is anti-conservative (violation $0.11$--$0.21$, up to $2\times$ target) and is what the practical row of Table~\ref{tab:acq} and the per-model/POPE tables report (claim-level splits). The $300$ splits reuse one dataset, so they estimate rather than independently replicate the guarantee.

\paragraph{Calibration-set size.} Coverage improves monotonically with calibration size (Table~\ref{tab:calsize}), the practical lever in the conservative small-$\alpha$ regime; the slight risk inflation at $80\%$ is small-test-set noise.

\begin{table}[h]
\centering
{\small\begin{tabular}{lccccc}
\toprule
calibration \% & $20$ & $35$ & $50$ & $65$ & $80$\\
\midrule
coverage ($\alpha{=}0.10$) & $0.30$ & $0.33$ & $0.34$ & $0.35$ & $0.36$\\
risk ($90$th pct) & $0.09$ & $0.10$ & $0.10$ & $0.11$ & $0.12$\\
\bottomrule
\end{tabular}}
\caption{Calibration-set-size effect (LLaVA, existence). More calibration data buys coverage at controlled risk.}
\label{tab:calsize}
\end{table}

\paragraph{Score ablation.} On existence claims, claim-detection AUROC rises from raw claim likelihood ($0.808$) to global ungrounding $s_{\text{glob}}$ ($0.838$) to the acquisition score $s_{\text{acq}}$ ($0.879$), confirming that image-dependence, and then acquisition, each add signal.

\paragraph{Compute.} All experiments are inference-only on RTX A6000 GPUs; the full study (four models, $\sim$$11.5$k POPE claims plus COCO-constructed existence/spatial/attribute claims, with five-crop acquisition and calibration sweeps) runs in tens of GPU-hours.

\section{Additional Experiments}

\subsection{Per-Backbone Image-Level Certification}
Table~\ref{tab:pope} is claim-level and empirical; here we certify all four backbones under the \emph{exact}, image-level, one-claim-per-image procedure of Table~\ref{tab:acq} (POPE-\textsc{random} existence claims re-scored with the image id stored, one claim kept per image so calibration is i.i.d.). Table~\ref{tab:backbone} shows the guarantee \emph{holds for every backbone}: the certificate (Theorem~\ref{thm:valid}) applies, and the empirical test-set exceedance is $0.00$ at every $\alpha$; acquisition lifts detection AUROC for all four ($0.57$--$0.85\to0.73$--$0.89$), and BCEA's guaranteed coverage meets or exceeds abstention at every setting (e.g.\ at $\alpha{=}0.20$, LLaVA-NeXT $0.01\to0.18$, InternVL2 $0.00\to0.17$). The per-image sample is small ($N\approx500$), so the conservative certificate leaves coverage near zero at the tightest $\alpha$; the certified \emph{improvement} from acquisition is clearest at $\alpha{=}0.20$. This is the image-level, certified counterpart of the empirical POPE results.

\begin{table}[htbp]
\centering
\setlength{\tabcolsep}{4pt}
{\small\begin{tabular}{lccccc}
\toprule
& & \multicolumn{2}{c}{AUROC} & \multicolumn{2}{c}{No-Acq / BCEA cov}\\
\cmidrule(lr){3-4}\cmidrule(lr){5-6}
backbone & $N$ & glob & +acq & $\alpha{=}.10$ & $\alpha{=}.20$ \\
\midrule
LLaVA-1.5   & $500$ & $.85$ & $.89$ & $.00/.04$ & $.15/\mathbf{.25}$ \\
Qwen3-VL    & $487$ & $.67$ & $.76$ & $.00/.00$ & $.01/\mathbf{.04}$ \\
LLaVA-NeXT  & $487$ & $.57$ & $.73$ & $.00/.01$ & $.01/\mathbf{.18}$ \\
InternVL2   & $500$ & $.73$ & $.84$ & $.00/.00$ & $.00/\mathbf{.17}$ \\
\bottomrule
\end{tabular}}
\caption{\textbf{Image-level, one-claim-per-image \emph{exact} certification, all four backbones} (POPE-\textsc{random} existence, $300$ splits, $\delta{=}0.1$). The certificate (Theorem~\ref{thm:valid}) applies to every backbone, and the empirical test-set exceedance is $\mathbf{0.00}$ at every $\alpha$. Acquisition lifts AUROC and guaranteed coverage throughout; small $N$ makes the tightest-$\alpha$ coverage conservative.}
\label{tab:backbone}
\end{table}

\subsection{Coverage Scales with the Budget}

Figure~\ref{fig:budget} sweeps the acquisition budget $B$ under the \emph{practical} variant, re-calibrating at each $B$. Coverage generally increases with budget---from $0.29$ at $B{=}0$ to $0.35$ at $B{=}5$ for $\alpha{=}0.10$---while the violation frequency stays near $\delta$. (This is the practical regime; under the exact certificate the effect reverses---targeted acquisition beats full budget, Figure~\ref{fig:gate}.) It need not be monotone: at $\alpha{=}0.05$ the recalibrated coverage dips slightly after $B{=}1$, because Proposition~\ref{prop:mono} bounds only \emph{fixed}-threshold quantities, not the recalibrated coverage, and finite-sample recalibration adds noise. \emph{Where} to spend a fixed total budget across claims, by contrast, does not matter here: greedy and uniform allocation are statistically indistinguishable (both $\approx0.34$, within CI), for the structural reason given above.

\paragraph{Adaptive stopping makes the budget real.} The fixed budget can be spent \emph{adaptively}: acquire guided crops one at a time and stop as soon as the running score clears the deployment threshold $\hat\tau$ (Proposition~\ref{prop:seq} makes this within-claim stopping valid at no correction). Because the running max is monotone, this makes \emph{identical} assert/abstain decisions to full-budget acquisition---same coverage, same zero violations under the exact certificate---while cutting the average number of acquisition passes from $5$ to $3.96$ at $\alpha{=}0.10$ and $3.22$ at $\alpha{=}0.20$ ($21$--$36\%$ fewer; at the near-abstain $\alpha{=}0.05$ almost every claim gets a full look, so the saving is small). Claims that clear early stop early and only hard claims consume the full allowance, so ``budgeted'' denotes an adaptively spent budget, not merely a fixed cap.

\begin{figure}[htbp]
\centering
\includegraphics[width=\columnwidth]{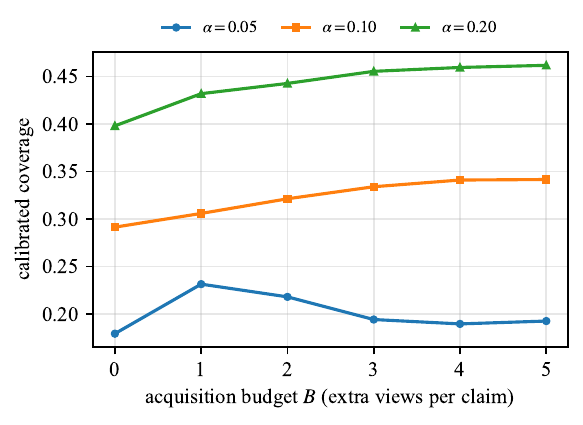}
\caption{\emph{Practical} variant: calibrated coverage vs.\ acquisition budget $B$, re-calibrating at each $B$. Coverage generally grows with budget (not monotonically---see $\alpha{=}0.05$), while the violation frequency stays near $\delta$. Under the \emph{exact} certificate the effect reverses (Figure~\ref{fig:gate}).}
\label{fig:budget}
\end{figure}

\begin{figure}[htbp]
\centering
\includegraphics[width=\columnwidth]{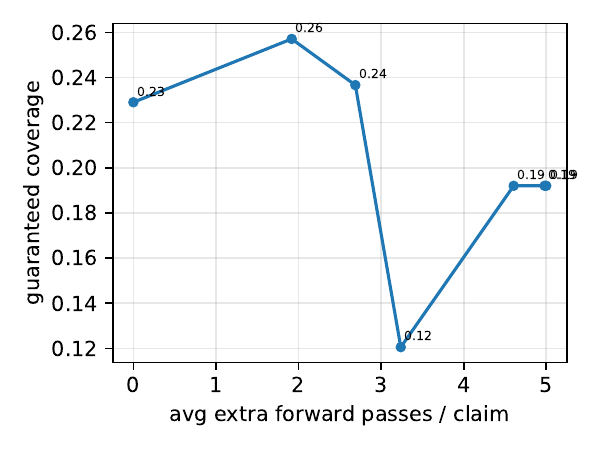}
\caption{Gating (three-way decision), exact guarantee, $\alpha{=}0.10$. Guaranteed coverage vs.\ average \emph{extra} forward passes per claim as the borderline band widens ($0$ = never acquire, $5$ = always). Targeted acquisition ($\approx2$ passes) beats full acquisition ($5$ passes): under the exact certificate, spending budget everywhere hurts.}
\label{fig:gate}
\end{figure}

\subsection{Guaranteed Guided Acquisition, and Is It Compute or Structure?}
\label{sec:tta}
The guided coverage of Table~\ref{tab:guided} is measured under the practical variant, so we re-run CLIP-guided acquisition through the \emph{exact} certificate of Table~\ref{tab:acq} (Bonferroni fixed-grid, $3{,}000$ one-per-image claims, $300$ splits, test-set exceedance $0$ throughout). Guaranteed coverage is $.06/.23/.41$ at $\alpha{=}0.05/0.10/0.20$, versus No-Acq's $.14/.23/.28$: full-budget guided certification beats abstention at $\alpha{=}0.20$ ($.41$ vs.\ $.28$) but not at tight $\alpha$, where acquiring on every claim inflates the score-max for false claims too---exactly what the \emph{gating} of Table~\ref{tab:acq} repairs. The guided gain is thus a genuine guarantee at moderate risk, not an artifact of the anti-conservative scan.

A natural worry is that any gain is merely the extra compute of $B$ additional forward passes. We separate compute from structure with a \emph{compute-matched} baseline on the subset for which \emph{both} uniform-grid and CLIP-guided crops were computed ($360$ one-per-image claims; Table~\ref{tab:tta}): the same $B{=}5$ uniform-grid crops, but aggregated the standard test-time-augmentation way---mean (or log-mean-exp) over crops---instead of our structured max-over-crops acquisition score. Because this is a smaller, harder subset than the $3{,}000$-claim set above, its absolute certified coverages are lower (No-Acq $.05$ at $\alpha{=}0.20$, vs.\ $.28$ on the full set); only the \emph{within-table} comparison at fixed data and compute is meaningful. At equal compute, TTA averaging \emph{does not help and can hurt}: pooling crops that usually miss the queried object dilutes the evidence, leaving guaranteed coverage at or below No-Acq. Only the structured max, which reads a crop as evidence \emph{that the object is present somewhere}, lifts detection ($0.807\!\to\!0.842$ AUROC), and CLIP targeting of the same budget lifts it further ($\to0.852$) and roughly doubles guaranteed coverage. The gain is in \emph{how} the acquired views are used, not in spending more passes. This compute-matched test, together with the ReCoVERR-style \textsc{Emp.-thr.}\ baseline of Table~\ref{tab:acq} (same crops, empirical risk threshold in place of our certificate), isolates our two claims---the value of the crop \emph{policy} and of finite-sample \emph{calibration}---at matched data and budget. A full reimplementation of accuracy-oriented active-perception systems (ViCrop, V*, ReCoVERR) under our certificate is orthogonal and left to future work; here the controlled comparison is against their operative ingredient rather than their engineering.

\begin{table}[htbp]
\centering
\setlength{\tabcolsep}{3pt}
{\small\begin{tabular}{lcc}
\toprule
score (same $B{=}5$ crops) & AUROC & cov @ $\alpha{=}0.20$ \\
\midrule
No-Acq (no crops) & $.807$ & $.05$ \\
TTA mean (equal compute) & $.817$ & $.01$ \\
TTA log-mean-exp (equal compute) & $.821$ & $.01$ \\
structured max (uniform grid) & $.842$ & $.02$ \\
\textbf{structured max (CLIP-guided)} & $\mathbf{.852}$ & $\mathbf{.09}$ \\
\bottomrule
\end{tabular}}
\caption{Compute-matched acquisition (LLaVA existence, exact certificate, $360$-claim subset with both crop types; coverage guaranteed at $\alpha{=}0.20$---absolute values are lower than the $3{,}000$-claim set and only the within-table comparison is meaningful). All acquiring rows use the \emph{same} five crops---only the aggregation differs (last row also CLIP-targets). Naive TTA averaging matches no-acquisition; the structured max and CLIP guidance carry the gain, so the improvement is structural, not compute.}
\label{tab:tta}
\end{table}

\begin{figure}[htbp]
\centering
\includegraphics[width=\columnwidth]{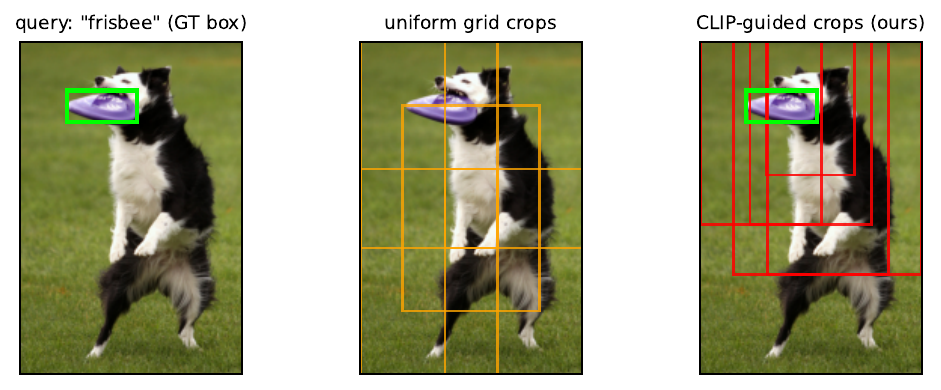}
\caption{Uniform grid crops (orange) spread blindly; CLIP-guided crops (red) concentrate on the queried small object (green ground-truth box), which the grid usually misses. No ground-truth location is used at inference---CLIP selects crops by similarity to ``a photo of a $\langle$object$\rangle$''.}
\label{fig:crop}
\end{figure}

\begin{figure}[htbp]
\centering
\includegraphics[width=\columnwidth]{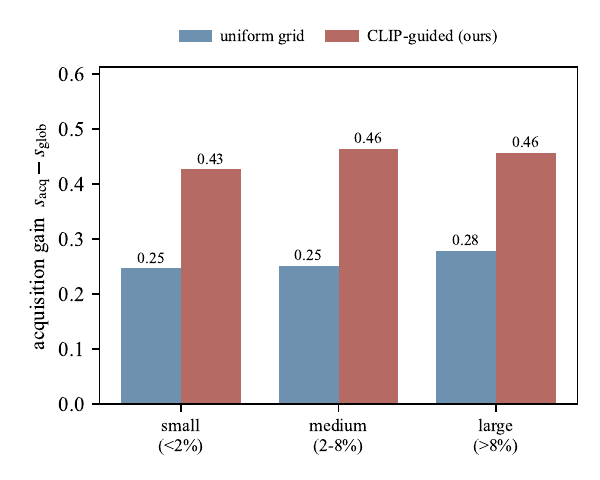}
\caption{Acquisition gain $s_{\text{acq}}{-}s_{\text{glob}}$ by object size. Uniform grid crops give size-independent gains; CLIP-guided crops nearly double the gain and, crucially, rescue \emph{small} objects the grid crops away.}
\label{fig:sizegain}
\end{figure}

\begin{figure*}[htbp]
\centering
\includegraphics[width=0.74\textwidth]{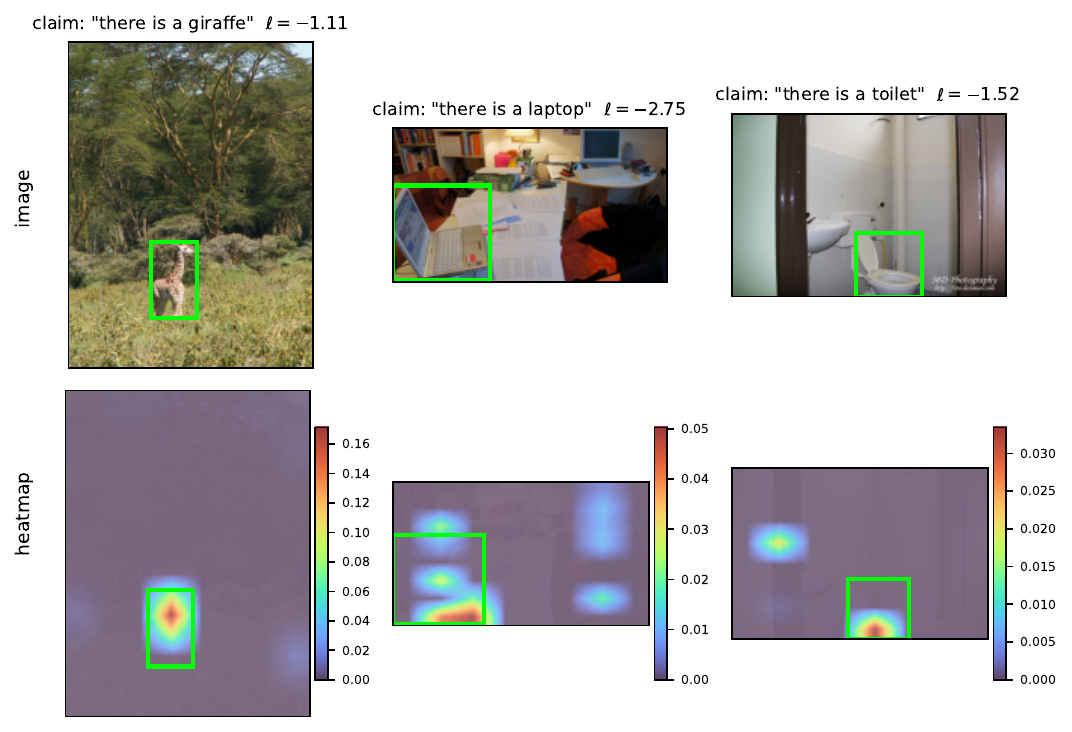}
\caption{Occlusion sensitivity for existence claims (LLaVA-1.5). \emph{Top:} image with the object's ground-truth box (green). \emph{Bottom:} we slide a black patch over an $8{\times}8$ grid and record the drop in the claim's mean log-likelihood (colorbar); brighter = occluding there hurts the claim more, i.e.\ the evidence is there. The high-sensitivity region concentrates on the queried object (green box), justifying region- and crop-based acquisition.}
\label{fig:heat}
\end{figure*}

\subsection{Active-Perception Policies Under Identical Calibration}
To isolate the acquisition \emph{policy} from the calibrator, we run several active-perception policies through the \emph{same} exact certificate on the same claims (Table~\ref{tab:activeperc}). A single targeted crop (a ViCrop-style saliency crop, $1$ extra pass) already beats naive multi-crop TTA averaging ($5$ passes)---\emph{targeting} matters more than \emph{count}---and our CLIP-guided max (BCEA) is strongest (AUROC $0.852$; guaranteed coverage $0.09$ at $\alpha{=}0.20$ vs.\ No-Acq's $0.05$). The gain is thus the policy, not merely crops plus a calibrator. A full reimplementation of the ViCrop/V*/ReCoVERR \emph{systems} is orthogonal engineering left to future work; here the controlled comparison is against their operative ingredient---a crop/search policy---under our guarantee.

\begin{table}[htbp]
\centering
\setlength{\tabcolsep}{4pt}
{\small\begin{tabular}{lccc}
\toprule
acquisition policy & +psg & AUROC & cov@$.20$ \\
\midrule
No-Acq & $0$ & $.807$ & $.05$ \\
uniform-grid TTA-mean & $5$ & $.817$ & $.01$ \\
uniform-grid max & $5$ & $.842$ & $.02$ \\
single guided crop (ViCrop-style) & $1$ & $.825$ & $.04$ \\
\textbf{CLIP-guided multi-crop (BCEA)} & $5$ & $\mathbf{.852}$ & $\mathbf{.09}$ \\
\bottomrule
\end{tabular}}
\caption{Active-perception policies under one \emph{identical} exact certificate (LLaVA existence, $360$ one-per-image claims, $\delta{=}0.1$; +psg = extra forward passes). Only the acquisition policy differs across rows; targeting (guided crops) beats brute count (grid TTA), and multi-crop guided is best.}
\label{tab:activeperc}
\end{table}

\subsection{Stricter Confidence Levels ($\delta$)}
The main text fixes the calibration confidence at $\delta{=}0.1$ ($90\%$). A higher-stakes deployment can demand more, and the exact certificate simply holds at stricter $\delta$: the empirical test-set exceedance stays $0$ at $\delta{=}0.05$ and $\delta{=}0.01$, while guaranteed coverage degrades only gently (Table~\ref{tab:deltasweep}; at $\alpha{=}0.10$, $0.23\!\to\!0.21\!\to\!0.19$ as $\delta$ tightens $0.10\!\to\!0.05\!\to\!0.01$; unchanged at $\alpha{=}0.20$). The choice $\delta{=}0.1$ is thus a convenience, not a limitation; safety-critical use can tighten it at a modest coverage cost.

\begin{table}[htbp]
\centering
\setlength{\tabcolsep}{6pt}
{\small\begin{tabular}{lccc}
\toprule
$\delta$ (test-set exceed.\ $=0$) & cov@$.05$ & cov@$.10$ & cov@$.20$ \\
\midrule
$0.10$ & $.06$ & $.23$ & $.41$ \\
$0.05$ & $.04$ & $.21$ & $.41$ \\
$0.01$ & $.01$ & $.19$ & $.41$ \\
\bottomrule
\end{tabular}}
\caption{Guaranteed coverage under stricter calibration confidence $\delta$ (BCEA guided, exact certificate, $3{,}000$ one-per-image claims, $300$ splits). The certificate holds at every $\delta$ (empirical test-set exceedance $0$); coverage costs little to tighten confidence to $99\%$.}
\label{tab:deltasweep}
\end{table}

\subsection{Generality Across Models}

We repeat the core experiments on Qwen3-VL-7B \citep{bai2025qwen3} (Table~\ref{tab:qwen}). The findings transfer. On existence claims, acquisition lifts the grounding AUROC from $0.70$ to $0.77$, and BCEA improves coverage over abstention at every level---most strikingly at $\alpha{=}0.05$, where abstention can assert almost nothing ($1\%$) while BCEA asserts $15\%$ at the same practical target risk (practical variant). On spatial relations, the global ungrounding score is again at chance ($0.50$ AUROC), confirming that the failure of global scores on relational claims is not specific to one model. Acquisition-and-recalibration is thus a model-agnostic recipe; the absolute grounding quality and the strength of a given intervention vary by backbone.

\begin{table}[htbp]
\centering
{\small\begin{tabular}{lccc}
\toprule
\multicolumn{4}{l}{\emph{Existence: practical coverage at target risk $\alpha$}}\\
$\alpha$ & \textsc{No-Acq} & \textsc{BCEA} & risk \\
\midrule
$0.05$ & $0.01$ & $\mathbf{0.15}$ & $0.05$ \\
$0.10$ & $0.13$ & $\mathbf{0.22}$ & $0.10$ \\
$0.20$ & $0.22$ & $\mathbf{0.30}$ & $0.21$ \\
\midrule
\multicolumn{4}{l}{AUROC: $s_{\text{glob}}{=}0.70$, $s_{\text{acq}}{=}0.77$; spatial $s_{\text{glob}}{=}0.50$}\\
\bottomrule
\end{tabular}}
\caption{BCEA on Qwen3-VL-7B. The acquisition-and-recalibration recipe transfers: BCEA beats guaranteed abstention at every risk level, and global scoring is again at chance for relations.}
\label{tab:qwen}
\end{table}

\subsection{Comparison to \textsc{ConfLVLM} Scoring}

Our No-Acq baseline is a conformal grounding filter of our own; to compare against the published alternative we re-implement the scoring functions of \textsc{ConfLVLM} \citep{li2025towards}---claim log-probability given the image, given \emph{text only} (their language-prior reference), their ratio, and CLIP image--text similarity---and certify each with the same exact procedure on the same claims (Table~\ref{tab:conflvlm}). Our evidence-sufficiency score, which references a \emph{blank image} rather than text-only, is the stronger signal ($0.807$ vs.\ $0.769$ AUROC), and acquisition widens the gap to $0.852$, roughly doubling guaranteed coverage at $\alpha{=}0.20$ ($0.09$ vs.\ $0.05$). The comparison isolates the scoring function; it is not a reimplementation of the full \textsc{ConfLVLM} system.

\begin{table}[htbp]
\centering
\setlength{\tabcolsep}{4pt}
{\small\begin{tabular}{lcc}
\toprule
score & AUROC & cov/viol$_{\alpha=.2}$ \\
\midrule
\textsc{ConfLVLM} ratio (img$-$text) & $.769$ & $.02$/$.01$ \\
\textsc{ConfLVLM} CLIP sim. & $.785$ & $.00$/$.00$ \\
\textsc{ConfLVLM} $\log p(c\!\mid\!\text{img})$ & $.795$ & $.06$/$.00$ \\
ours $s_{\text{glob}}$ (img$-$blank) & $.807$ & $.05$/$.00$ \\
\textbf{ours $s_{\text{acq}}$ (BCEA)} & $\mathbf{.852}$ & $\mathbf{.09}$/$.00$ \\
\bottomrule
\end{tabular}}
\caption{Scoring-function head-to-head under the same exact certification ($360$ claims, one per image). Referencing a blank image beats the text-only reference; acquisition widens the gap.}
\label{tab:conflvlm}
\end{table}

\subsection{Additional Visualizations}

\begin{table}[htbp]
\centering
\setlength{\tabcolsep}{4pt}
{\small\begin{tabular}{lc}
\toprule
Score & AUROC (correct vs.\ swapped) \\
\midrule
Raw claim likelihood & $0.567$ \\
$s_{\text{glob}}$ (global ungrounding) & $0.574$ \\
$s_{\text{flip}}$ (flip intervention, ours) & $\mathbf{0.765}$ \\
\bottomrule
\end{tabular}}
\caption{Spatial left/right relations. The global score used by prior conformal filters is near chance; a cheap structured intervention recovers the signal.}
\label{tab:spatial}
\end{table}

\begin{table*}[htbp]
\centering
\setlength{\tabcolsep}{4pt}
{\small\begin{tabular}{llcccc}
\toprule
Claim & base score & base decision & acquired view & BCEA score & BCEA decision \\
\midrule
``there is a kite''                 & $0.08$ ($<\tau_{\rm base}$) & \textcolor{red!70!black}{Abstain} & CLIP crop zooms onto kite       & $1.30$ & \textcolor{green!45!black}{\textbf{Assert}} \\
``there is a sink''                 & $0.09$ ($<\tau_{\rm base}$) & \textcolor{red!70!black}{Abstain} & CLIP crop zooms onto sink       & $1.13$ & \textcolor{green!45!black}{\textbf{Assert}} \\
``there is a horse''                & $0.12$ ($<\tau_{\rm base}$) & \textcolor{red!70!black}{Abstain} & CLIP crop zooms onto horse      & $1.29$ & \textcolor{green!45!black}{\textbf{Assert}} \\
``ball is left of person'' (rel.)   & $\approx$ chance   & \textcolor{red!70!black}{Abstain} & horizontal flip flips the truth & $s_{\text{flip}}{=}{+}0.14$ & \textcolor{green!45!black}{\textbf{Assert}} \\
\bottomrule
\end{tabular}}
\caption{Qualitative comparison on truly-supported claims (LLaVA-1.5, $\alpha{=}0.10$; $\tau_{\rm base}{=}0.16$, recalibrated $\tau_{\rm BCEA}{=}0.49$). The guaranteed baseline abstains because the full-image evidence score is below threshold (or, for relations, uninformative); BCEA acquires a targeted view that lifts the score above the recalibrated $\tau_{\rm BCEA}$ and asserts---each row a question answered with evidence rather than declined.}
\label{tab:rescue}
\end{table*}

\begin{figure}[htbp]
\centering
\includegraphics[width=0.82\columnwidth]{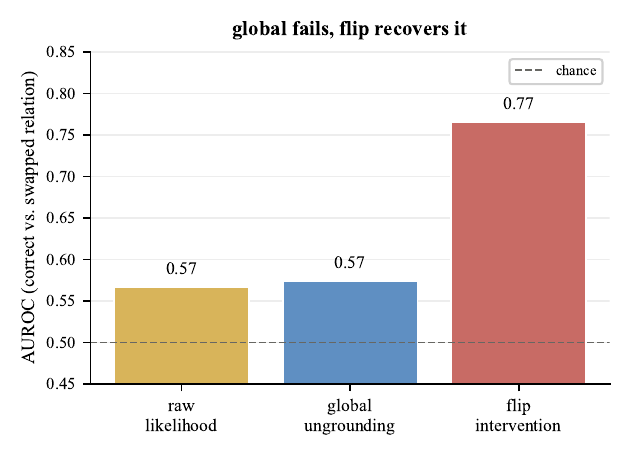}
\caption{Spatial left/right relations ($996$ balanced claims). The global ungrounding score is near chance because both objects are present; a single horizontal-flip intervention recovers the relational signal.}
\label{fig:spatial}
\end{figure}

\section{Extended Discussion and Limitations}

Our study uses four backbones and two working claim families (existence, left/right); a principled per-claim-type intervention library is the natural next step. The method has a clear scope boundary: evidence sufficiency certifies only what the backbone can perceive. Counting, relative size, above/below, color, and color--object \emph{binding} are near chance ($s_{\text{glob}}$ AUROC $0.50$--$0.58$) and no intervention we tried recovers signal, so coverage there collapses to zero. This is faithful behavior---where a $7$--$8$B VLM cannot perceive a property, our score reports \emph{ungrounded} rather than fabricating confidence, and the filter abstains. BCEA adds signal only where the backbone can localize and a truth-flipping, in-distribution intervention exists; per-claim budget \emph{allocation}, by contrast, is coverage-neutral here (a clarifying negative). The framework is otherwise general: any black-box LVLM, any claim-type intervention, any exchangeable calibration set.

\paragraph{From probed to free-form claims.} Most experiments verify \emph{probed} claims (clean labels, controlled difficulty). The free-form pipeline (Table~\ref{tab:freeform}) closes much of the gap---BCEA certifies claims LLaVA volunteers, $20.6\%$ hallucinated---but our parser extracts \emph{object mentions} only, so attributes, relations and counts in a caption stay unverified and parser noise is uncharged. A full atomic-fact decomposer is future work.

\bibliography{aaai2027}
\end{document}